\setlist[enumerate]{leftmargin=.5in}
\setlist[itemize]{leftmargin=.5in}
\crefname{hypothesis}{Hypothesis}{Hypotheses}
\author{XXX\thanks{XXX 
  (\email{XXX}, \url{XXX}).}
\and XXX\thanks{XXX
  (\email{XXX}, \email{XXX}).}
\and XXX\footnotemark[3]}
\DeclareMathOperator{\diag}{diag}
\title{CURE: Curvature Regularization For Missing Data Recovery}
\author{Bin Dong \thanks{Beijing International Center for Mathematical Research, Peking University, Beijing, 100871 China.(\email{dongbin@math.pku.edu.cn})}
\and Haocheng Ju\thanks{School Of Mathematical Science, Peking University, Beijing, 100871 China.(\email{jhc9912@pku.edu.cn})}
\and Yiping Lu\thanks{Institute for Computational and Mathematical Engineering (ICME), Stanford University, Stanford, CA, 94305.(\email{yplu@stanford.edu})}
\and Zuoqiang Shi\thanks{Department of Mathematical Sciences, Yau Mathematical Sciences Center, Tsinghua University, Beijing, 100084 China.}(\email{zqshi@tsinghua.edu.cn})}
\begin{document}

\maketitle
\begin{abstract}
Missing data recovery is an important and yet challenging problem in imaging and data science. Successful models often adopt certain carefully chosen regularization. Recently, the low dimensional manifold model (LDMM) was introduced by \cite{osher2016a} and shown effective in image inpainting. The authors of \cite{osher2016a} observed that enforcing low dimensionality on image patch manifold serves as a good image regularizer. In this paper, we observe that having only the low dimensional manifold regularization is not enough sometimes, and we need smoothness as well. For that, we introduce a new regularization by combining the low dimensional manifold regularization with a higher order \textbf{CU}rvature \textbf{RE}gularization, and we call this new regularization CURE for short. The key step of CURE is to solve a biharmonic equation on a manifold. We further introduce a weighted version of CURE, called WeCURE, in a similar manner as the weighted nonlocal Laplacian (WNLL) method \cite{shi2017weighted}. Numerical experiments for image inpainting and semi-supervised learning show that the proposed CURE and WeCURE significantly outperform LDMM and WNLL respectively.
\end{abstract}

\begin{keywords}
  Graph Laplacian, Nonlocal Methods, Point Cloud, Biharmonic Equation, Interpolation, Image Inpainting.
\end{keywords}

\begin{AMS}
  62H35 65D18 68U10 58C40 58J50
\end{AMS}

\section{Introduction}

Missing data recovery is a fundamental problem in imaging science and data analysis. In many cases, it can be formulated as a function interpolation problem in multiple dimension spaces. Let $u: \mathbb{R}^d\to \mathbb{R}$ be an unknown function. We would like to acquire its values on a set of points $P = \{\bm{p}_1,\ldots,\bm{p}_n\}\subset\mathbb{R}^d$. However, due to practical limitations, we are only able to observe its values on a subset $S = \{\bm{s}_1,\ldots,\bm{s}_m\}\subset P$. The goal of missing data recovery is to reconstruct the missing values of $u$ based on the observed values in $S$. In this paper, we focus on two kinds of typical and important tasks of missing data recovery, i.e. semi-supervised learning and image inpainting, though it can be well applied to other related tasks as well.

Since the problem of missing data recovery is an under-determined inverse problem, we can only hope to recover the missing values of $u$ if we have certain prior knowledge on $u$, e.g. $u$ belonging to a certain function class or having certain mathematical or statistical properties. Successful models include Rudin--Osher--Fatemi(ROF) model \cite{rudin1992nonlinear} and its variants \cite{gilboa2008nonlocal,bredies2010total,chan2000high}, the applied harmonic analysis models such as wavelets \cite{stephane1999wavelet,daubechies1992ten},  curvelet \cite{starck2002curvelet}, shearlet \cite{easley2008sparse, lim2010discrete} and wavelet frame \cite{bao2016image,cai2010simultaneous,chan2003wavelet, cai2009split,zhang2013l0,dong2012mra}, the Bayesian statistics based methods \cite{roth2005fields,shan2008high,zhu1997prior}; and the list goes on. 

More recently, people started to use low dimensional manifolds to describe the underlying relationship between the data points which serves as an effective geometric prior on the interpolant. For example, \cite{osher2016a, peyre2009manifold} observed that image patches, regarded as data points in a high dimension space, often lie on a low dimensional manifold; and \cite{coifman2006diffusion,zhu2018ldmnet} allowed the data lie close to (but may not be on) a certain low dimensional manifold. 

To harvest the low dimensional property of data, \cite{osher2016a} applied the following Dirichlet energy \cite{zhu2003a} to regularize the dimension of the embedded manifold $\mathcal{M}$
\begin{equation}\label{e:ldmm}
\text{LDMM}(u)= \frac{1}{2} \left\|\nabla_{\mathcal{M}}u\right\|_{L^2(\mathcal{M})}^2.
\end{equation}
In \cite{osher2016a}, the authors gave a geometric interpretation of the Dirichlet regularizer. \textcolor{blue}{They showed that the dimension of a smooth manifold embedded in $\mathbb{R}^d$ can be calculated by  a simple formula
$$
    dim(\mathcal{M})(\bm{x})=\sum_{j=1}^{d}|\nabla_{\mathcal{M}}\alpha_i(\bm{x})|^2
$$
where $\alpha_i$ is the coordinate function, for any $\bm{x}=(x_1,\cdots,x_d)\in \mathcal{M}\subset \mathbb{R}^d$, 
$\alpha_i(\bm{x})=x_i$}\\
This means that we can minimize the Dirichlet energy to enforce a penalty on the (local) dimensions of the underlying manifold. As a result, the authors referred to their method as the low dimensional manifold model (LDMM). To recover missing data, they proposed to minimize the Dirichlet energy subject to the constraints $u(\bm{s})=g(\bm{s})$, $\forall \mathbf{s}\in S$, where $g: S\to\mathbb{R}$ denotes the observed part of the underlying function $u$.

\subsection{Higher Order Regularization}

Only low dimension structure of the manifold does not readily ensure smoothness of the reconstructed manifold which may lead to unsatisfactory results \cite{nadler2009semi,el2016asymptotic,calder2018properly}. As a simple demonstration, we show in Figure \ref{fig:manifold} a degenerated interpolation result from the two data points labeled in red. Although the interpolated surface is also a low dimensional manifold, it is certainly not a smooth interpolation.

In this paper, we look for the proper interpolation by not only assuming low dimensionality of the manifold, but also the smoothness. For that, in addition to the Dirichlet energy, we further introduce a \textbf{CU}rvature \textbf{RE}gularization (\textbf{CURE}) term via biharmonic operator. The proposed CURE energy reads as follows
$$
\text{CURE}(u)=\text{LDMM}(u)+ \frac{\lambda}{2}\int_{\mathcal{M}}\textcolor{blue}{(\Delta_{\mathcal M} u)^2},
$$
where $\text{LDMM}$ is given by \eqref{e:ldmm}. Note that regularizing the curvature by introducing higher order energy term has already been proposed in image processing \cite{shen2003euler}. However, to the best of our knowledge, we are the first to promote curvature-like regularization for nonlocal image processing. Furthermore, inspired by the weighted nonlocal Laplacian (WNLL) method proposed by \cite{shi2017weighted} which can preserve the symmetry of the Laplace operator, we propose a weighted CURE (WeCURE) model which can significantly improve the results over the CURE model. To demonstrate the effectiveness of CURE and WeCURE, we test our model on semi-supervised learning and image inpainting task. Numerical results show that CURE/WeCURE produces significantly better results than LDMM/WNLL in both tasks. A glimpse of the results for image inpainting is shown in Figure \ref{fig:exp} where we can see the significant improvement of CURE over LDMM and WeCURE over WNLL. More details and numerical results can be found in \cref{sec:ssl} and \cref{sec:ip}.

\begin{figure}[H]
\centering
\includegraphics[scale=0.2]{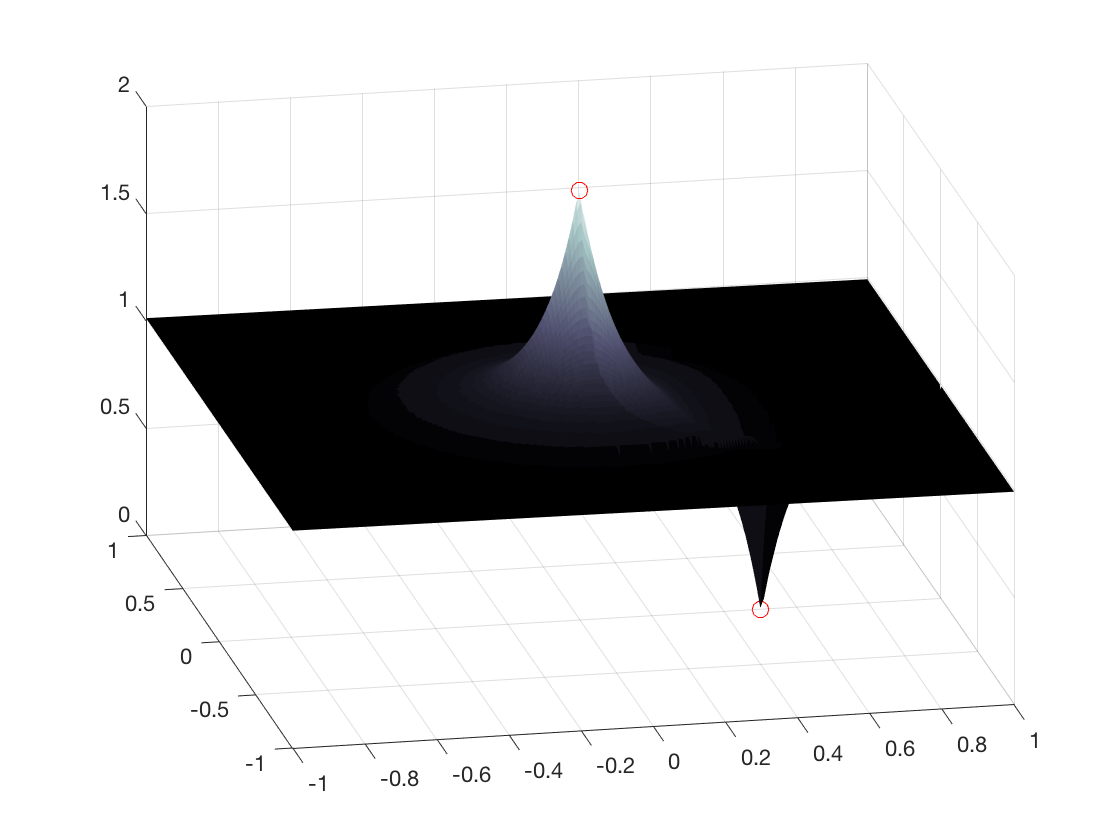}

\caption{\scriptsize{A low dimensional manifold without curvature regularization.}}
\label{fig:manifold}
\end{figure}

\subsection{Other Related Works}

Nonlocal patch-based image restoration methods\cite{dabov2006image,danielyan2012bm3d,buades2005non,buades2005a,gilboa2008nonlocal} have achieved great success in the literature. In addition, \cite{gilboa2007a,bertozzi2012diffuse,dong2017sparse} also introduced different graph Laplacian-based regularization on manifold and graphs. Our method, however, focuses on both smoothness and low dimensionality of the underlying data manifold. The most similar work to ours is \cite{agarwal2006higher}, where the authors also introduced a higher order regularization for semi-supervised learning. The difference is threefold. First, we extend the task to image inpainting rather than just semi-supervised learning. Secondly, we introduce a curvature perspective on the higher order regularization. Last but not least, the newly proposed weighted version of CURE, i.e. WeCURE, has significant performance boost in both image inpainting and semi-supervised learning.

Another approach to regularize the dimension of the manifold is through low-rank matrix completion \cite{gu2014weighted,lai2018manifold}. The basic idea is to group the patches by similarity and penalized the rank/nuclear norm of the matrix obtained by reshaping the stack of the similar patches. The work in this paper reveals a benefit of PDE-based approaches that higher order information, such as curvature, can be naturally incorporated in the model.

\subsection{Organization of the Paper}

The paper is organized as follows. The proposed CURE and WeCURE model are introduced in \cref{sec:wbih}, Numerical comparisons of CURE and WeCURE with LDMM and WNLL for semi-supervised learning and image inpainting are presented in \cref{sec:ssl} and \cref{sec:ip} respectively. The general setting of the asymptotic analysis of the proposed model is presented in \cref{sec:ana} and the complete proof is given in \cref{prof1,prof2}. Conclusions and summary are given in \cref{sec:conclusions}.

\begin{figure}[H]
\begin{minipage}[t]{0.28\linewidth}
\centering
\includegraphics[scale=0.2]{09.png}
\end{minipage}
\begin{minipage}[t]{0.28\linewidth}
\centering
\includegraphics[scale=0.2]{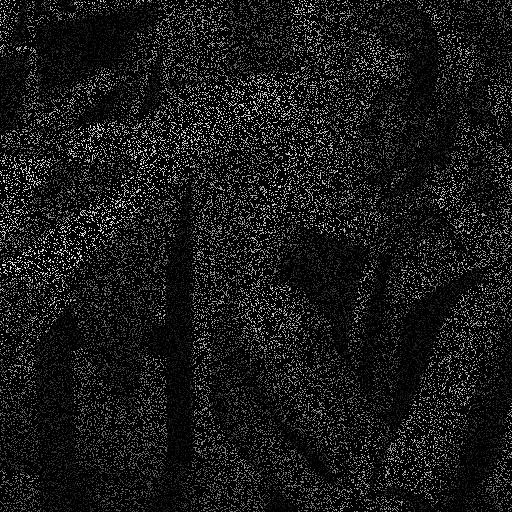}
\end{minipage}
\hspace{3mm}
\begin{minipage}[t]{0.35\linewidth}
\centering
\includegraphics[scale=0.266]{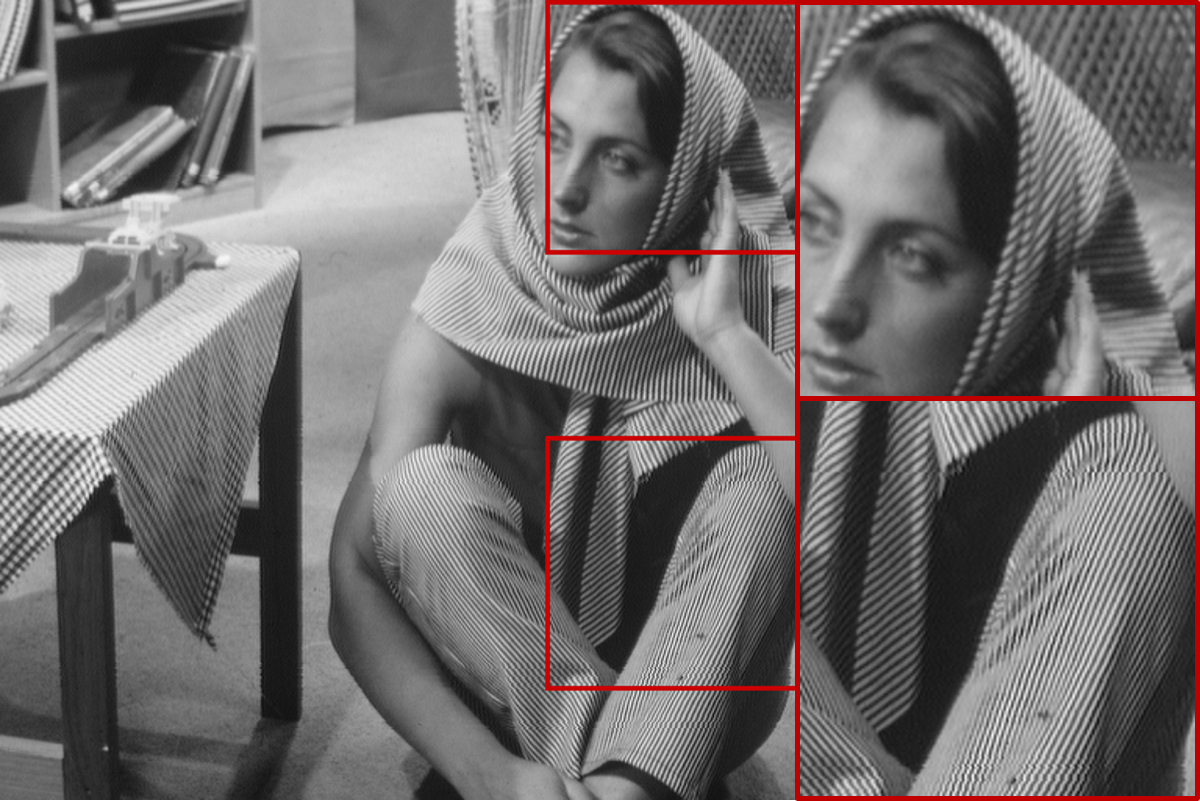}
\end{minipage}

\end{figure}
\begin{figure}[H]
\begin{minipage}[t]{0.24\linewidth}
\centering
\includegraphics[scale=0.16]{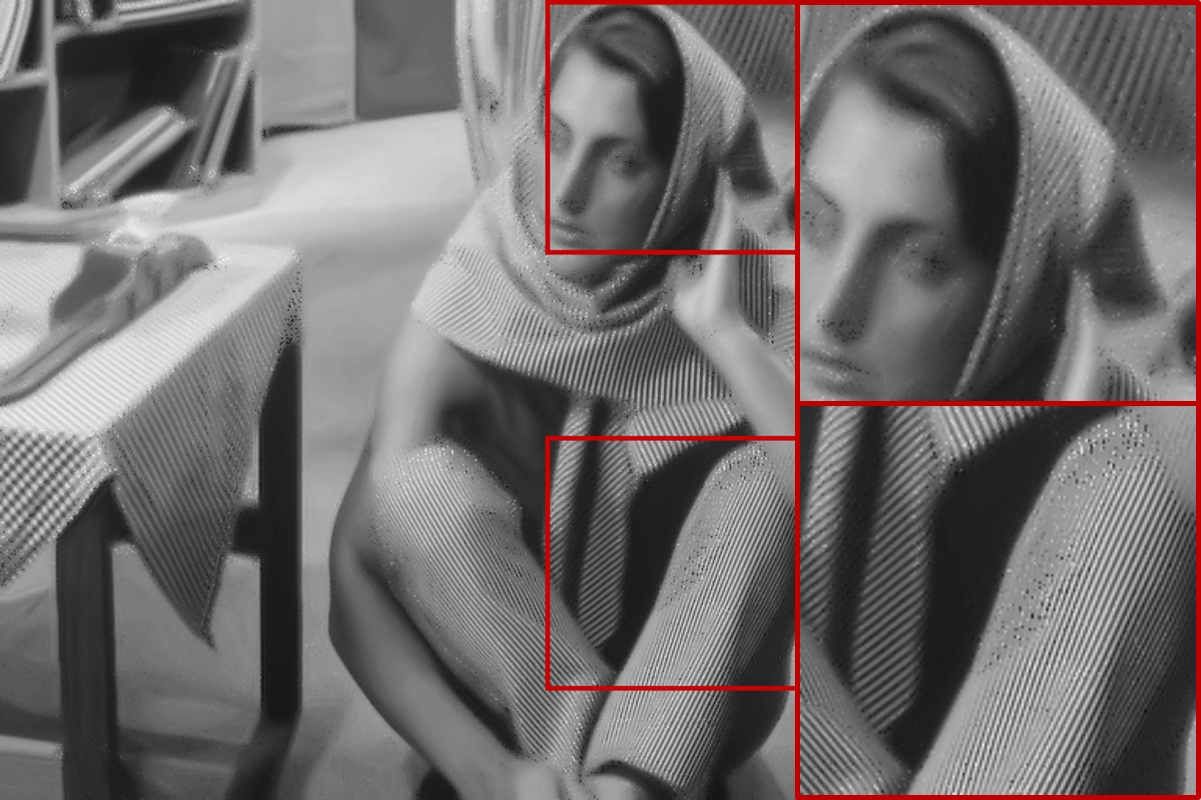}
\caption*{LDMM:PSNR=26.81dB, SSIM=0.68}
\end{minipage}
\begin{minipage}[t]{0.24\linewidth}
\centering
\includegraphics[scale=0.16]{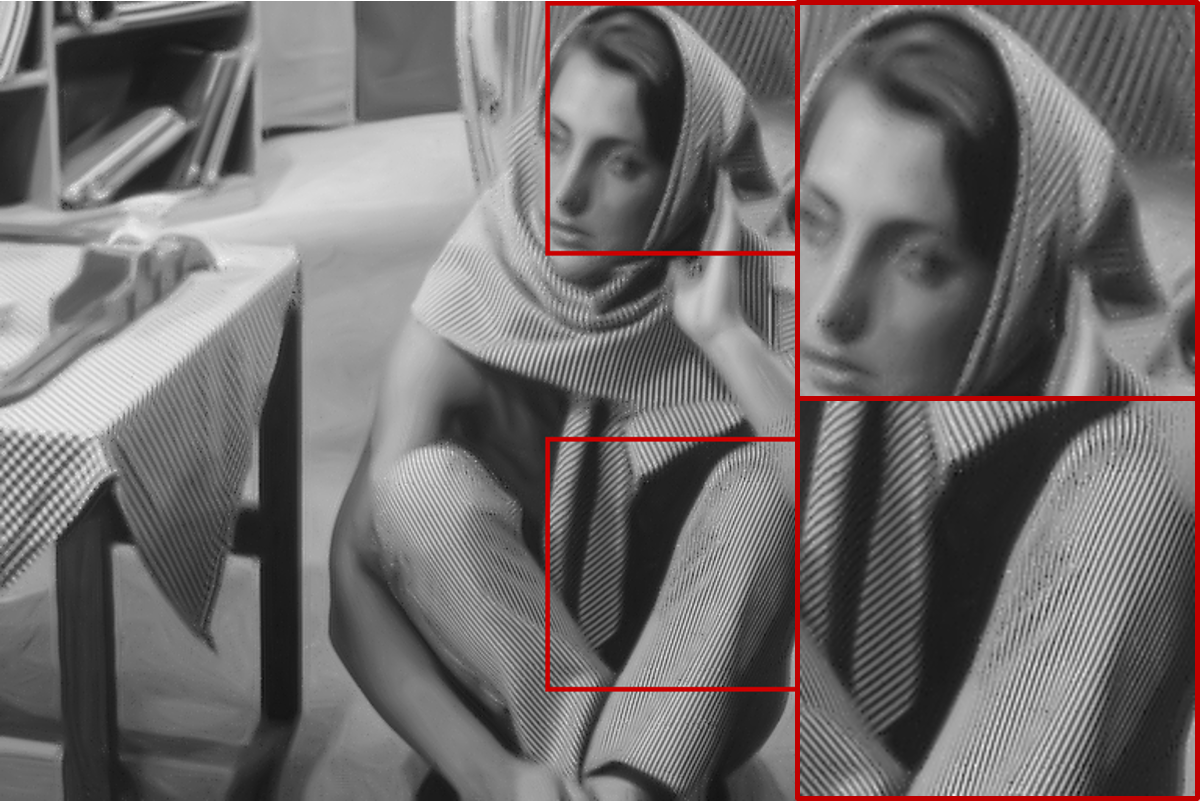}
\caption*{WNLL:PSNR=28.73dB, SSIM=0.73}
\end{minipage}
\begin{minipage}[t]{0.24\linewidth}
\centering
\includegraphics[scale=0.16]{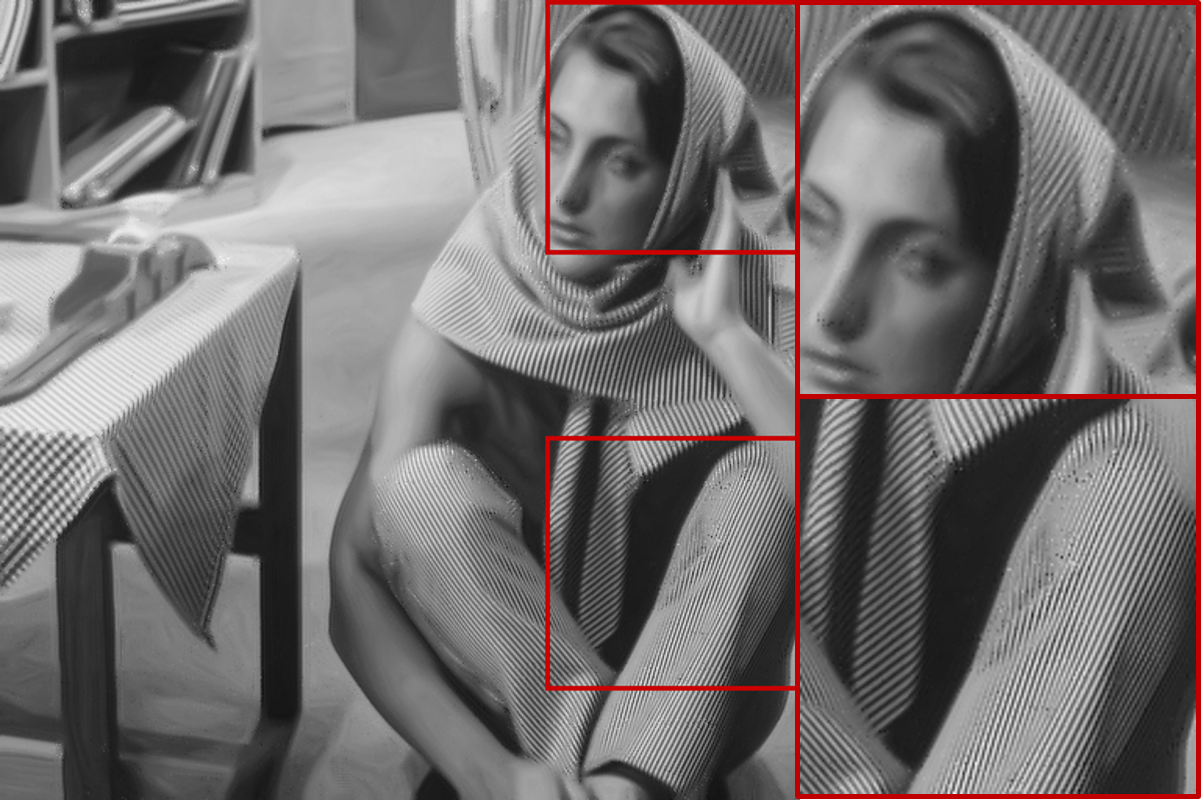}
\caption*{CURE:PSNR=28.97dB, SSIM=0.75}
\end{minipage}
\begin{minipage}[t]{0.24\linewidth}
\centering
\includegraphics[scale=0.16]{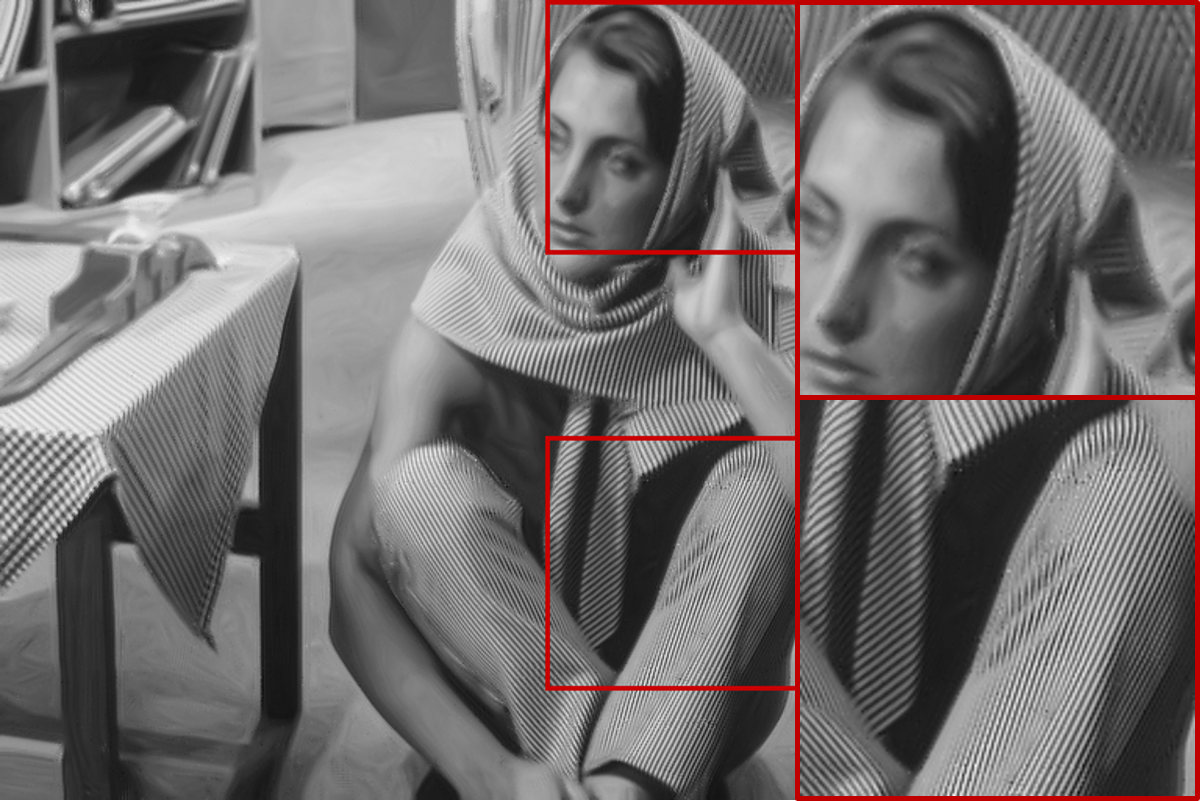}
\caption*{WeCURE:PSNR=29.78dB, SSIM=0.77}
\end{minipage}
\caption{\scriptsize{First row: original image, $20\%$ subsampled image, zoom-in views of the original image. Second row: inpainting results of LDMM, WNLL, CURE, WeCURE}}
\label{fig:exp}
\end{figure}

\section{Curvature Regularization (CURE): Model and General Algorithm}
\label{sec:wbih}

In this section, we first propose the CURE model and a weighted version of CURE. Then, we will discuss how (We)CURE can be applied to missing data recovery in general.

\subsection{CURE}

Let $\mathcal{M}$ be a smooth manifold embedded in $\mathbb{R}^d$ and locally parameterized as
$$
x=\psi(\alpha):U\subset \mathbb{R}^k\rightarrow\mathcal{M}\subset \mathbb{R}^d
$$
where $k=dim_x(\mathcal{M})$ is the local dimension of $\mathcal{M}$ at $x$, $\alpha=(\alpha^1,\ldots,\alpha^k)^\top\in\mathbb{R}^k$ and $x=(x_1,\ldots,x_d)^\top\in\mathcal{M}$. Let $\bm{u}=(u_1,u_2,\cdots,u_d)$ be the coordinate function on $\mathcal{M}$, \emph{i.e.} for $x\in\mathcal{M}$
$$
u_i(x)=x_i,\ 1\le i\le d.
$$

To enforce smoothness of the underlying manifold, we further regularize the curvature of the manifold. Recall that the mean curvature of a manifold $\mathcal{M}$ is defined as the trace of the second fundamental form \cite{manfio2015minimal}, \emph{i.e.} $H\vec{n}=g^{i,j}\nabla_i\nabla_j X$. Here $g^{i,j}$ is the metric tensor defined by $g_{i'j'}=\left<\partial_{i'},\partial_{j'}\right>=\sum_{l=1}^k \partial_{i'}\psi^{l}\partial_{j'}\psi^{l}$. If the coordinate function $\bm{u}(x)$ is an isometric immersion, the mean curvature can be calculated as $\|\Delta \bm{u}\|_2/k$, where $\Delta \bm{u}=(\Delta u_1,\Delta u_2,\cdots,\Delta u_d)$(see \cite{manfio2015minimal} for detail). 

Now, we are ready to introduce the CURE energy in continuum setting:
$$
\text{CURE}(u)=\text{LDMM}(u)+\frac{\lambda}{2}\int_{\mathcal{M}}\textcolor{blue}{(\Delta_{\mathcal M} u)^2},
$$
where $\text{LDMM}(u)$ is given by \eqref{e:ldmm}. The gradient $\nabla_{\mathcal{M}}u$ is commonly approximated by the nonlocal gradient in the discrete setting
$$
\nabla_{\mathcal{M}}u(\bm{x},\bm{y})\approx\sqrt{\omega(\bm{x},\bm{y})}(u(\bm{y})-u(\bm{x}))=:\nabla_P u(\bm{x},\bm{y}),\ \bm{x},\bm{y}\in P\subset\mathcal{M},
$$
where $P$ is a set with $n$ points on the manifold $\mathcal{M}$. Then, 
\begin{equation*}
\text{LDMM}(u)\approx\frac{1}{2}\sum_{\bm{x},\bm{y}\in P}{w(\bm{x},\bm{y})(u(\bm{x})-u(\bm{y}))^2}=\|\nabla_P u\|_2^2.
\end{equation*}
Here, $w(\bm{x},\bm{y})$ is a given symmetric weight function which is often chosen to be a Gaussian weight $w(\bm{x},\bm{y})$=exp$(-\frac{\left\|\bm{x}-\bm{y}\right\|^2}{\sigma^2})$, where $\sigma$ is a parameter and $\left\|\cdot \right\|$ denotes the Euclidean norm in $\mathbb{R}^{\frac{n(n-1)}{2}}$. The negative of the first variation of $\|\nabla_P u\|_2^2$ takes the form
\begin{equation*}
-\partial_u\left(\|\nabla_P u\|_2^2\right)=\sum_{\bm{y}\in P}{w(\bm{x},\bm{y})(u(\bm{x})-u(\bm{y}))},
\end{equation*}
which is the nonlocal Laplacian that has been used in image processing \cite{buades2006a,buades2005a,gilboa2007a,gilboa2008nonlocal}. It is also called graph Laplacian in spectral graph and machine learning literature \cite{chung1997a,zhu2003a}. To simplify the notation, we use $GL$ to denote the graph Laplacian \cite{li2016a,trillos2016continuum,trillos2018variational}: $$GLu(x):=\sum_{\bm{y}\in P}{w(\bm{x},\bm{y})(u(\bm{x})-u(\bm{y}))}.$$

Now, the proposed \textbf{CURE} model can be cast as the following optimization problem in the discrete setting
\begin{equation}\label{model:cure}
    \min_u\ \|\nabla_P u\|_2^2+\frac{\lambda}{2}\|GLu\|_2^2.
\end{equation}

In \cite{shi2017weighted}, a weighted nonlocal Laplacian (WNLL) method was introduced to balance the loss at both labeled and unlabeled points and to preserve the symmetry of the Laplace operator at the same time. Let $S\subset P$ be a set with labeled points. The WNLL model in the discrete setting is given by
\begin{equation*}
 \text{WNLL}(u)=\|\left(\nabla_P u\right)_{|P\backslash S}\|_2^2 + \frac{|P|}{|S|}\|\left(\nabla_P u\right)_{|S}\|_2^2,
\end{equation*}
where $$\|\left(\nabla_P u\right)_{|S}\|_2^2:=\sum_{\bm{x}\in S,\bm{y}\in P}\frac{1}{2}w(\bm{x},\bm{y})(u(\bm{x})-u(\bm{y}))^2,$$ and similarly for $\|\left(\nabla_P u\right)_{|P\backslash S}\|_2^2$.

Following a similar idea as that in WNLL, we propose the weighted CURE model (\textbf{WeCURE}) in the discrete setting
\begin{equation}\label{model:wecure}
\min_u\ \text{WeCURE}(u):=\text{WNLL}(u)+\lambda\left[\|\left(GLu\right)_{|P\backslash S}\|_2^2 + \frac{|P|}{|S|}\|\left(GLu\right)_{|S}\|_2^2\right],
\end{equation}
where $$\|\left(GLu\right)_{|S}\|_2^2=\sum_{\bm{x}\in S}\left(\sum_{ \bm{y}\in P}{w(\bm{x},\bm{y})(u(\bm{x})-u(\bm{y}))}\right)^2$$ and similarly for $\|\left(GLu\right)_{|P\backslash S}\|_2^2$.

\subsection{CURE for Missing Data Recovery}

For missing data recovery, we can simply minimize the CURE or WeCURE energy with respect to the constraints $u(\bm{x})=g(\bm{x}),\bm{x}\in S$ where $g$ is the observed values of the underlying function to be recovered. We discuss implementation details for WeCURE. CURE is a special case of WeCURE with all weights equal to 1.

Recall the definition of the energy function of WeCURE \eqref{model:wecure} and notice that $u(\bm{x})=g(\bm{x}), \bm{x}\in S$. Then, WeCURE model for missing data recovery can be rewritten as
\begin{equation}\label{model:wecure:mdr}
\min_{u|_{P\backslash S}} \text{WNLL}\left(\left[\begin{array}{cc}u|_{P\backslash S}\\
g
\end{array} \right]\right)+ \lambda\left\|\sqrt{D}\cdot GL\left[\begin{array}{cc}u|_{P\backslash S}\\
g
\end{array} \right]\right\|_2^2,
\end{equation}
where $D=\diag\{d_1,d_2,\ldots,d_{|P|}\}$ with $d_i=1$ for $\bm{x}_{i}\in P\backslash S$ and $d_i=\frac{|P|}{|S|}$ for $\bm{x}_{i}\in S$, and $GL$ is the $|P|\times|P|$ matrix of graph Laplacian. The first variation of \eqref{model:wecure:mdr} is
$$
\partial_{u|_{P\backslash S}} \text{WeCURE}\left(\left[\begin{array}{cc}u|_{P\backslash S}\\
g
\end{array} \right]\right)=\partial_{u|_{P\backslash S}} \text{WNLL}\left(\left[\begin{array}{cc}u|_{P\backslash S}\\
g
\end{array} \right]\right)+\lambda\partial_{u|_{P\backslash S}} \left\|\sqrt{D}\cdot GL\left[\begin{array}{cc}u|_{P\backslash S}\\
g
\end{array} \right]\right\|_2^2.
$$
Note that $$\left\|\sqrt{D}\cdot GL\left[\begin{array}{cc}u|_{P\backslash S}\\
g
\end{array} \right]\right\|_2^2=\left\|\sqrt{D}\cdot GL\left[\begin{array}{cc}u|_{P\backslash S}\\
0
\end{array} \right]+\sqrt{D}\cdot GL\left[\begin{array}{cc}0\\
g
\end{array} \right]\right\|_2^2.$$ 
Thus
\begin{align*}
\partial_{u|_{P\backslash S}} \text{WeCURE}\left(\left[\begin{array}{cc}u|_{P\backslash S}\\
g
\end{array} \right]\right)
=&\partial_{u|_{P\backslash S}} \text{WNLL}\left(\left[\begin{array}{cc}u|_{P\backslash S}\\
g
\end{array} \right]\right)\\
&+\lambda GL^T\cdot D\cdot GL\left[\begin{array}{cc}u|_{P\backslash S}\\
0
\end{array} \right]+\lambda GL^T\cdot D\cdot GL\left[\begin{array}{cc}0\\
g
\end{array} \right].
\end{align*}
Then, the solution to problem \eqref{model:wecure:mdr} can be given by solving the following Euler-Lagrange equation
\begin{equation}\label{model:wecure:mdr:el}
\begin{array}{ll}
\left(GL\cdot \left[\begin{array}{cc}u|_{P\backslash S}\\
0
\end{array} \right]+\gamma \cdot DW\cdot \left[\begin{array}{cc}u|_{P\backslash S}\\
0
\end{array} \right]
+\lambda GL^T\cdot D\cdot GL\cdot \left[\begin{array}{cc}u|_{P\backslash S}\\
0
\end{array} \right]\right)({\bm{x}})
& \\
\hspace*{0.3in}=\sum_{\bm{y}\in S} w(\bm{x},\bm{y})g(\bm{y})+\gamma \sum_{\bm{y}\in S} w(\bm{y},\bm{x})g(\bm{y})-\lambda\left(GL^T\cdot D\cdot GL \left[\begin{array}{cc}0\\
g
\end{array} \right]\right)({\bm{x}}), & \bm{x}\in P\backslash S,
\end{array}
\end{equation}
where $DW=\text{diag}(w_1,w_2, \ldots,w_{|P|})$ with $w_i=\sum_{\bm{y}\in S} w(\bm{x}_i,\bm{y})$ and $\gamma$ is the weighted coefficient in WNLL. The above linear system is symmetric positive definite and sparse which can be solved efficiently by iterative solvers such as the conjugate gradient method. We remark that, for (non-weighted) CURE method, we only need to replace matrix $D$ above by identity matrix $Id_{|P|\times |P|}$. We summarize (We)CURE algorithm for missing data recovery in Algorithm \ref{alg:cure}.

\begin{algorithm}[htb]
\caption{(We)CURE for Missing Data Recovery}
\label{alg:cure}
\begin{algorithmic}
\Require \small Given point set $P = \{\bm{p}_1,\ldots,\bm{p}_n\} \subset \mathbb{R}^{d}$ and a partially labeled set $S\subset P$, and given the function values of $u$ on $S$, \emph{i.e.} $u(\bm{x})=g(\bm{x})$ for $\bm{x}\in S$.
\Ensure \small A recovered function $u$ on $P$.
\State Calculate the weight matrix $W=(w(\bm{p}_i,\bm{p}_j))_{n\times n}$ and the graph Laplacian $GL$. Set $DW=\text{diag}([\sum_{j=1}^m w(\bm{p}_i,\bm{p}_j)]_{i=m+1:n})$.
\State Solving the linear system \eqref{model:wecure:mdr:el} for $u|_{P\backslash S}$.
\end{algorithmic}
\end{algorithm}

\section{CURE for Semi-Supervised Learning}
\label{sec:ssl}

Semi-supervised learning is a challenging and yet frequently encountered machine learning task. It can be formulated as a missing data recovery problem \cite{zhu2003a}. Given a data set $P = \{\bm{p}_1,\ldots,\bm{p}_n\} \subset \mathbb{R}^{d}$, we assume there are totally $l$ different classes. Let $S\subset P$ be a subset of $P$ with labels, i.e
\begin{displaymath}
S=\bigcup_{i=1}^{l}S_i,
\end{displaymath}
where $S_{i}\subset P$ is the subset with label $i$. It is typical for semi-supervised learning that $|S|$ is far less than $|P|$. The objective of 
semi-supervised learning is to extend labels to the entire data set $P$. Our algorithm is summarized in Algorithm \ref{alg:SSL}.

\begin{figure}
\centering
\includegraphics[scale=0.2]{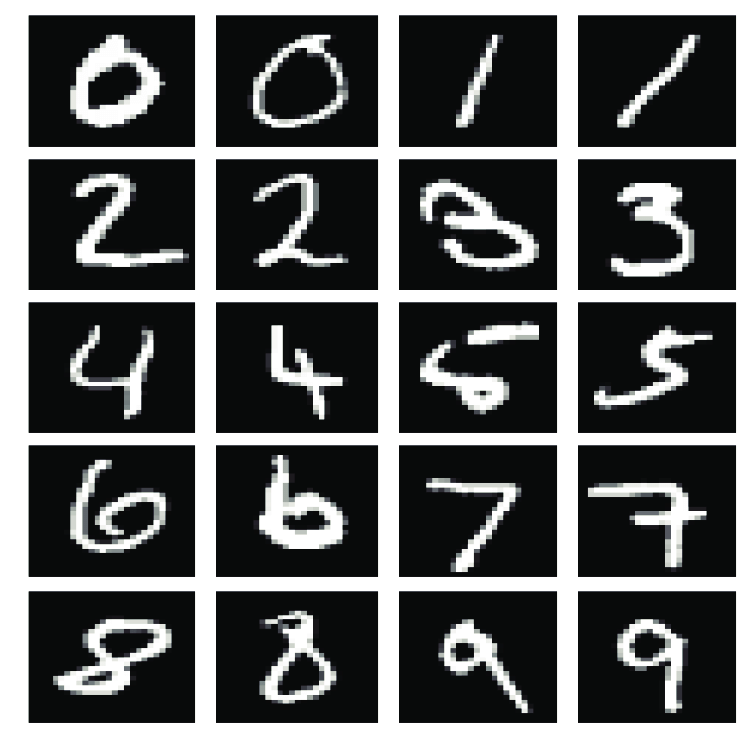}
\includegraphics[scale=0.4]{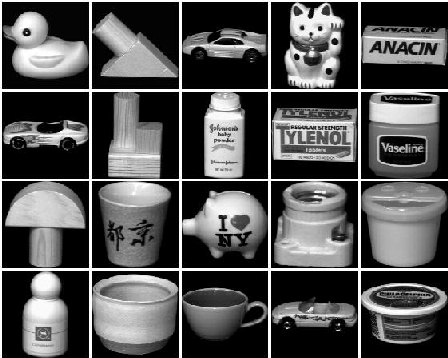}
\caption{\scriptsize{ Some images in MNIST and COIL20 dataset.}}\label{fig:mnist}
\end{figure} 

\begin{algorithm}[htb]
\caption{(We)CURE for Semi-Supervised Learning}
\label{alg:SSL}
\begin{algorithmic}
\Require \small Point set $P = \{\bm{p}_1,\ldots,\bm{p}_n\} \subset \mathbb{R}^{d}$ and a partially labeled set $S=\bigcup_{i=1}^{l}S_i$.
\Ensure \small A complete label assignment $L:P\rightarrow \{1,2,\ldots,l\}$
\State \textbf{for} $i = 1:l$ \quad \textbf{do}\\
\indent \indent \small Compute $\phi_{i}$ on $P$ with the known observation
\begin{displaymath}
\phi_i(\bm{x})=1, \bm{x}\in S_i, \qquad \phi_i(\bm{x})=0, \bm{x}\in S\backslash S_i,
\end{displaymath}
by Algorithm \ref{alg:cure}.
\State \textbf{end for}
\State \textbf{for} $\bm{x}\in P\backslash S$ \quad \textbf{do}\\
\indent \indent \small Label $\bm{x}$ as following
\begin{displaymath}
L(\bm{x})=k,\qquad {\rm where }\quad k={\rm arg}\max_{1\leq i\leq l}{\phi_i(\bm{x})}
\end{displaymath}
\State \textbf{end for}
\end{algorithmic}
\end{algorithm}

We test WNLL, Weighted Nonlocal Total Variation (WNTV) \cite{Li2019weighted}, CURE, WeCURE on the MNIST dataset \cite{lecun1998mnist} of handwritten digits classification \cite{burges-a}, COIL20 dataset\cite{Nene96columbiaobject} of object classification and ISOLET dataset\cite{Dua:2019} of spoken letter recognition. Some sample images from MNIST and COIL20 are shown in Figure \ref{fig:mnist}. The MNIST dataset contains 70,000 gray-scale images of size 28 $\times$ 28 with 10 classes of digits going from 0 to 9. Each class contains 7,000 images. Each image can be seen as a point in a 784-dimension Euclidean space. The COIL20 dataset contains 20 objects, and each object has 72 images. The size of each image is 32 $\times$ 32 pixels, with 256 grey levels per pixel. Thus, each image is represented by a 1024-dimensional vector. The ISOLET dataset contains 150 subjects who spoke the name of each letter of the alphabet twice. The speakers are grouped into sets of 30 speakers each and are referred to as isolet1 through isolet5. In our experiment, we use isolet1 which consists of 1560 samples with each sample represented by a 617-dimensional vector.

The weight function $w(\bm{x},\bm{y})$ is constructed as
\begin{equation}\label{e:weightfunction}
w(\bm{x},\bm{y})={\rm exp}\left(-\frac{\left\|\bm{x}-\bm{y}\right\|^2}{\sigma(\bm{x})^2}\right),
\end{equation}
where $\sigma(\bm{x})$ is chosen to be the distance between $\bm{x}$ and its $k$th nearest neighbor ($k=20$ in MNIST, $k=15$ in COIL20 and ISOLET).
To make the weight matrix sparse, the weight $w(\bm{x},\bm{y})$ is truncated to the 50 nearest neighbors.\\

In our test on MNIST, we choose five different sampling rates to form the training set: labeling 700, 100, 70, 50 and 35 images in the whole dataset at random. For each sampling rate, we repeat the test results 10 times. In our test on COIL20 and ISOLET, we choose three different sampling rates to form the training set: labeling $2\%$, $5\%$, $10\%$ at random. For each sampling rate, we repeat the test 10 times. Figure \ref{fig:mnistssl} shows the success rate of WNLL, CURE, and WeCURE method on MNIST dataset. The first five images of Figure \ref{fig:mnistssl} show the success rate for each sampling rate, while the last image shows the average success rate for each of the five sampling rate. It can be clearly
observed that the proposed CURE and WeCURE outperform WNLL for all the tested cases. With a high sampling rate, WeCURE is comparable with CURE, whereas WeCURE outperforms CURE in the cases with lower sampling rates. In terms of average success rate, both CURE and WeCURE outperform WNLL. We also compare (We)CURE with WNLL and Weighted Nonlocal Total Variation (WNTV) \cite{Li2019weighted} in Table \ref{table:mnist}. It can be seen that (We)CURE significantly outperforms both WNLL and WNTV in cases with lower sample rates (50/70000,100/70000). Table \ref{table:coil20} shows the result on COIL20 and ISOLET dataset. It can be seen that WeCURE outperforms CURE and WNLL by $3\%\sim4\%$.

\begin{figure}[H]
\begin{minipage}[t]{0.32\linewidth}
\centering
\includegraphics[scale=0.12]{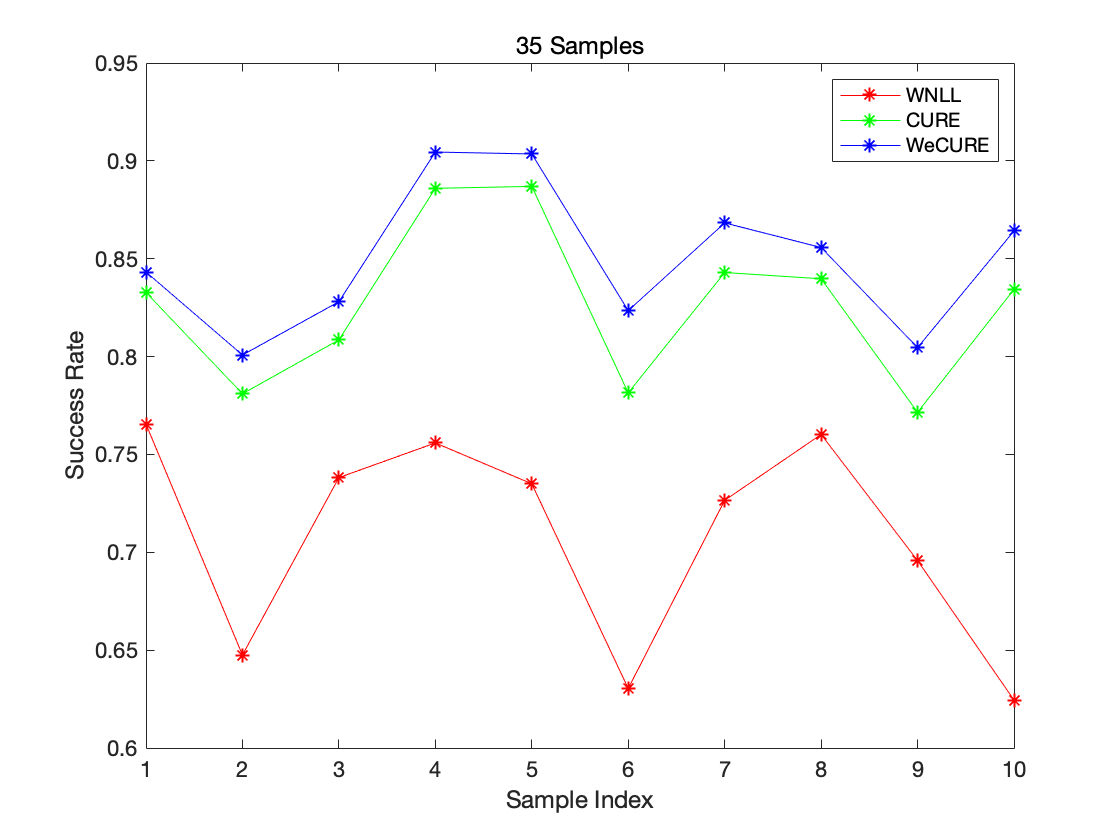}
\end{minipage}
\begin{minipage}[t]{0.32\linewidth}
\centering
\includegraphics[scale=0.12]{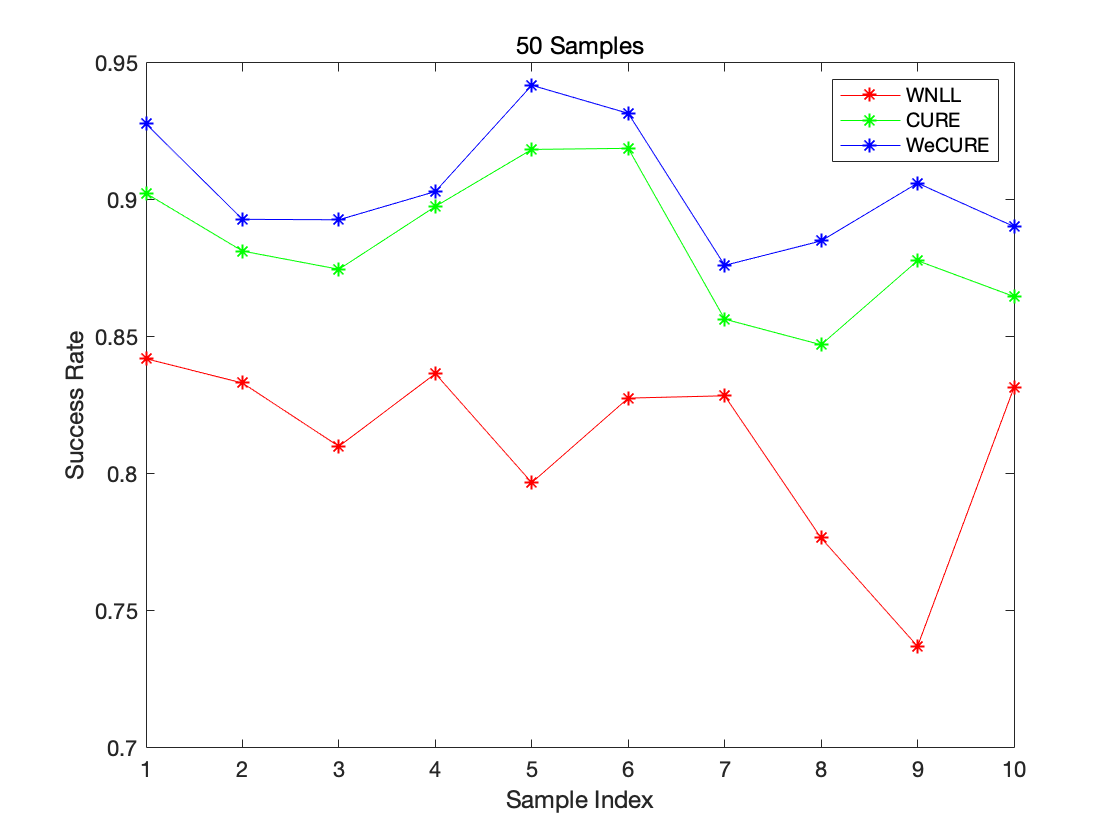}
\end{minipage}
\begin{minipage}[t]{0.32\linewidth}
\centering
\includegraphics[scale=0.12]{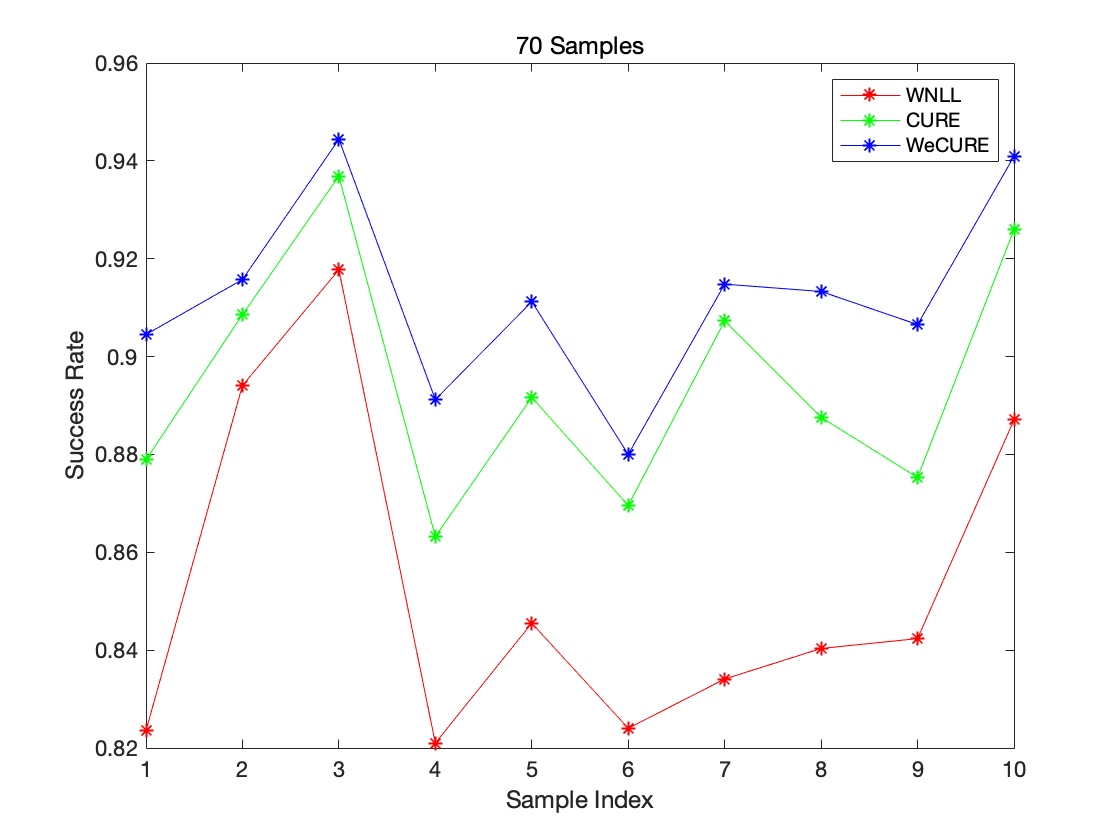}
\end{minipage}
\end{figure}
\begin{figure}[H]
\begin{minipage}[t]{0.32\linewidth}
\centering
\includegraphics[scale=0.12]{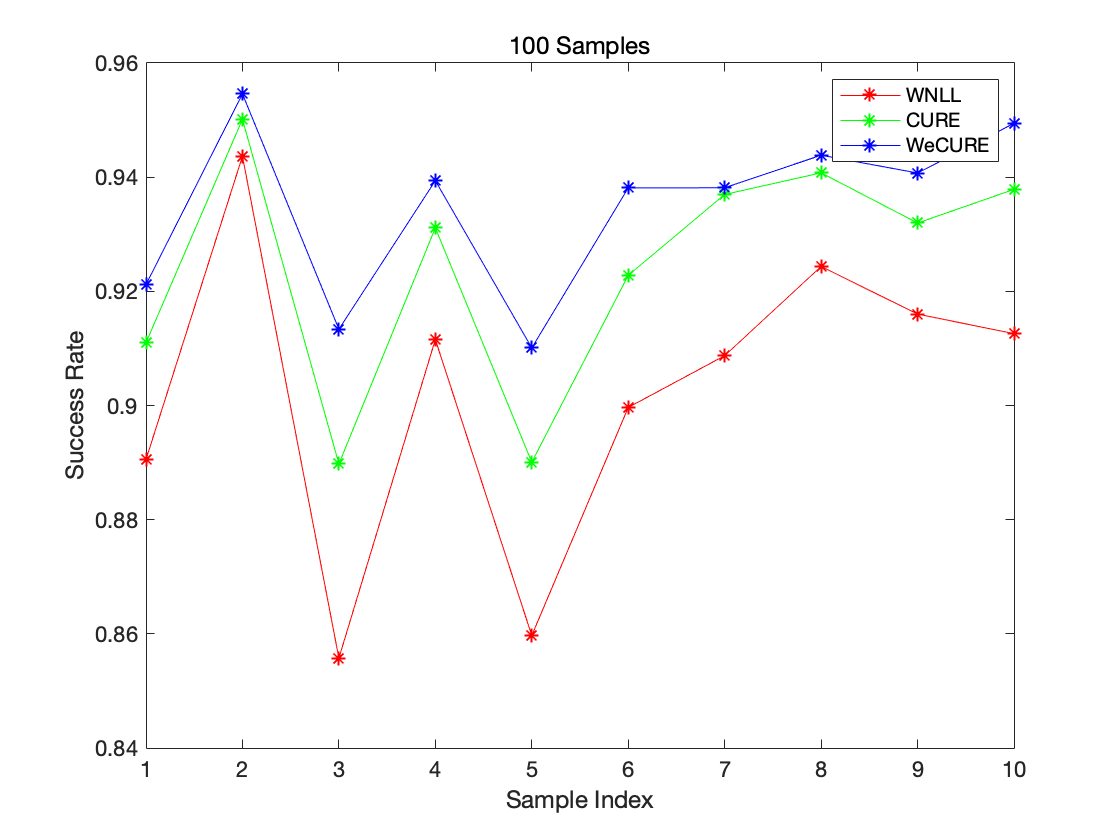}
\end{minipage}
\begin{minipage}[t]{0.32\linewidth}
\centering
\includegraphics[scale=0.12]{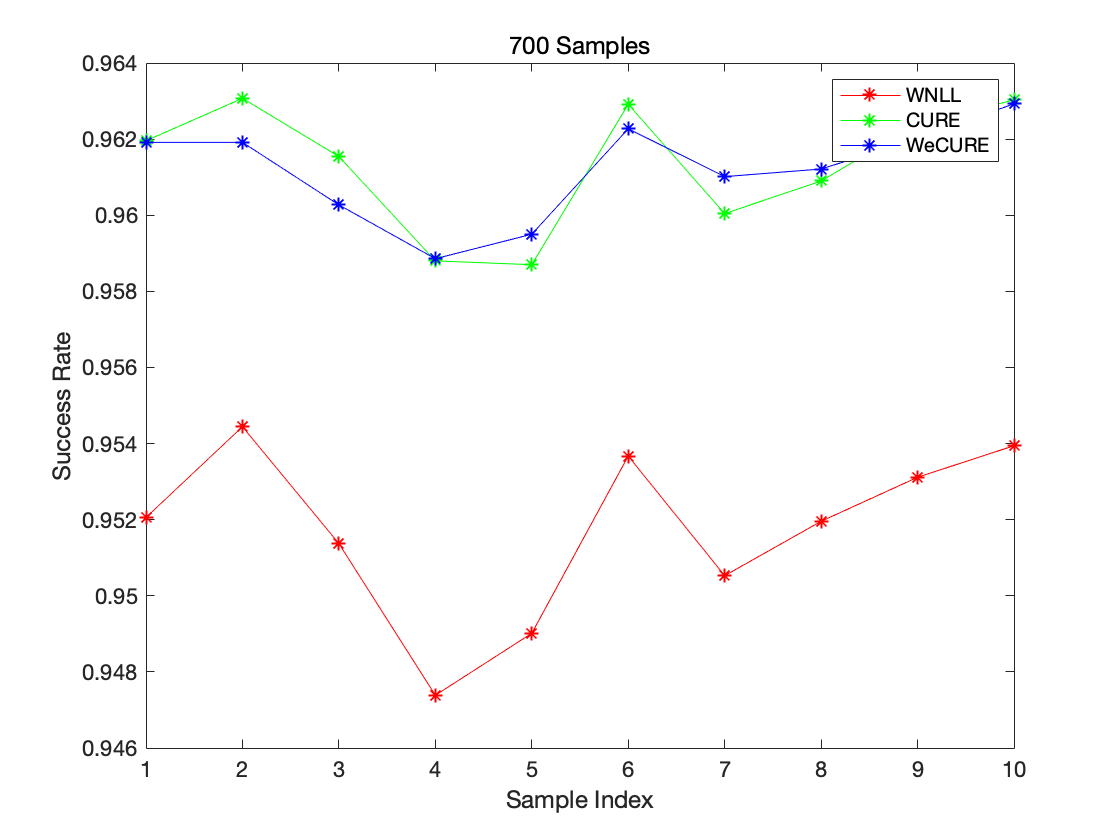}
\end{minipage}
\begin{minipage}[t]{0.32\linewidth}
\centering
\includegraphics[scale=0.12]{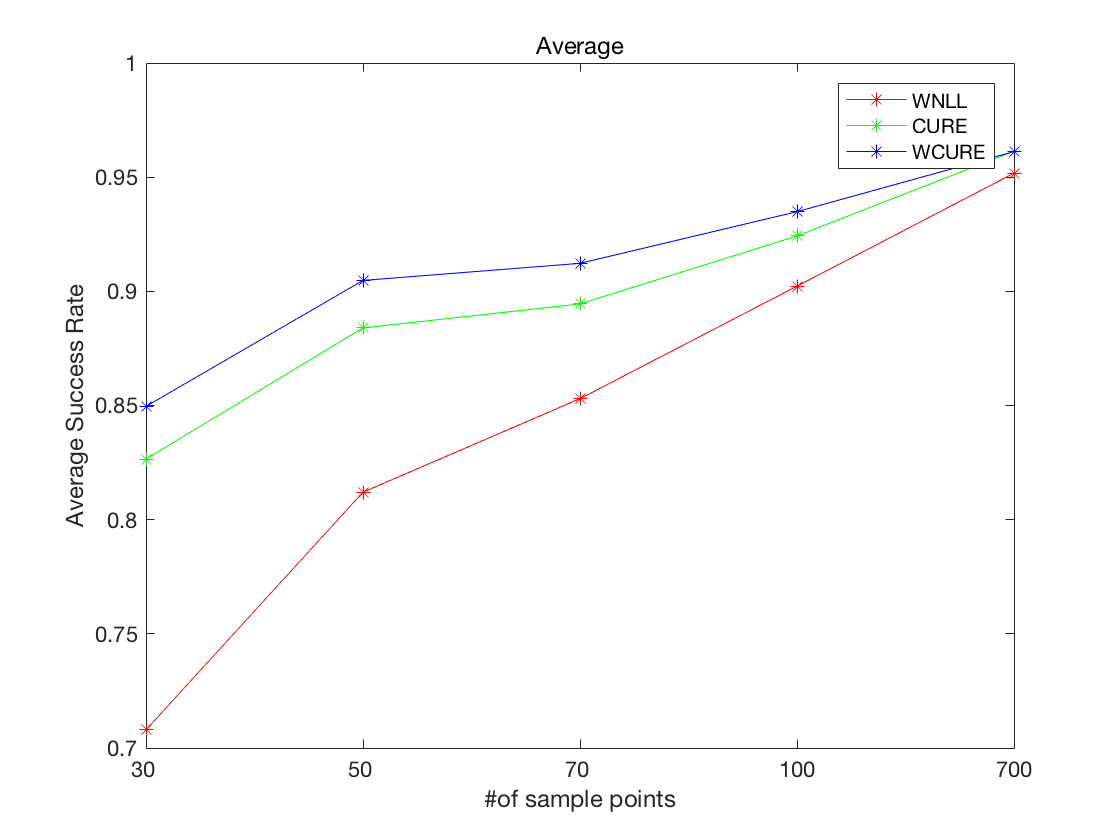}
\end{minipage}
\caption{\scriptsize{Comparisons of success rates by WNLL, CURE and WeCURE on MNIST.}}
\label{fig:mnistssl}
\end{figure}

\begin{table}[!hbp]
\centering
\begin{tabular}{|c|ccc|}
\hline
\textbf{Method}    & 50/70000 & 100/70000 & 700/70000 \\ \hline\hline
WNLL\cite{shi2017weighted}   & 73.60    & 87.84     & 93.25     \\
WNTV\cite{Li2019weighted}   & 78.35    & 89.86     & 94.08     \\
CURE   & \color[HTML]{0000FF} {88.40}    & \color[HTML]{0000FF} {92.42}     & \color[HTML]{FF0000} {96.13}     \\
WeCURE &\color[HTML]{FF0000} {90.48}    & \color[HTML]{FF0000} {93.49}     & \color[HTML]{0000FF} {96.12}    \\\hline
\end{tabular}
\caption{Classification accuracy in percentage for MNIST. The best results are in \textcolor{red}{red} and the second best results are in \textcolor{blue}{blue}.}
\label{table:mnist}
\end{table}

\begin{table}[!hbp]
\centering
\begin{tabular}{|c|ccc|ccc|}
\hline\hline
\multirow{2}*{Method}&\multicolumn{3}{|c|}{COIL20}&\multicolumn{3}{|c|}{ISOLET}\\
   & 2$\%$ & 5$\%$  & 10$\%$& 2$\%$ & 5$\%$  & 10$\%$  \\ \hline\hline
GL   & 55.61    & 68.50     & 76.11   & 31.19    & 45.51     & 66.27   \\
WNLL\cite{shi2017weighted}   & 59.59    & 74.13     & 80.65   & 49.12    & 61.90     & 73.05   \\
CURE   & \color[HTML]{0000FF} {59.73}    & \color[HTML]{0000FF} {74.77}     & \color[HTML]{0000FF} {80.91}   & \color[HTML]{0000FF} {49.14}    & \color[HTML]{0000FF} {61.94}     & \color[HTML]{0000FF} {73.23}  \\
WeCURE &\color[HTML]{FF0000} {63.29}    & \color[HTML]{FF0000} {77.65}     & \color[HTML]{FF0000} {84.76}  &\color[HTML]{FF0000} {52.65}    & \color[HTML]{FF0000} {64.92}     & \color[HTML]{FF0000} {76.50}    \\\hline
\end{tabular}
\caption{Classification accuracy in percentage for COIL20 and ISOLET. The best results are in \textcolor{red}{red} and the second best results are in \textcolor{blue}{blue}.}
\label{table:coil20}
\end{table}

\section{CURE for Image Inpainting}
\label{sec:ip}
\indent In this section, we apply (We)CURE to reconstruct the images with partially observed pixels. We adopt the assumption that image patches lie on a low dimensional and smooth manifold. Given an image $f \in \mathbb{R}^{m\times n}$, for any $(i,j)\in\{1,2,\ldots,m\}\times\{1,2,\ldots,n\}$, we
define an $s_1\times s_2$ image patch as $$p_{ij}(f)=\{f(\tilde i,\tilde j): i-(s_1-1)/2\le \tilde i\le i+(s_1-1)/2,\ j-(s_2-1)/2\le \tilde j\le j+(s_2-1)/2\},$$ where we assume $s_1$ and $s_2$ are odd integers and we adopt reflective boundary conditions for $(i,j)$ near image boundary. Define the patch set $P(f)$ as the collection of all patches:
\begin{displaymath}
P(f) = \{p_{ij}(f):(i,j)\in\{1,2,\ldots,m\}\times\{1,2,\times,n\}\}\subset \mathbb{R}^d,\quad d=s_1\cdot s_2.
\end{displaymath}
Define a function $u$ on $P(f)$ as 
\begin{displaymath}
u(p_{ij}(f)) = f(i,j),
\end{displaymath}
where $f(i,j)$ is the intensity of image $f$ at pixel $(i,j)$.

Now, suppose we only observe the image on a subset of pixels $\Omega \subset \{(i,j):1\leq i\leq m, 1\leq j\leq n\}$. We would like to recover the entire image $f$ from the observed data $f|_\Omega$. This problem can be recast as the interpolation of the function $u$
on the patch set $P(f)$ with $u$ being given in $S \subset P(f)$, $S = \{p_{ij}(f):(i,j)\in \Omega\}$. This falls into the general algorithmic framework of (We)CURE for missing data recovery (Algorithm \ref{alg:SSL}). Notice that the patch set $P(f)$ is unknown. Thus, we need to iterative update the patch set $P(f)$. We summarize the (We)CURE algorithm for this problem in Algorithm \ref{alg:SIR}.

\begin{algorithm}[htb]
\caption{Subsampled image restoration By WeCURE}
\label{alg:SIR}
\begin{algorithmic}
\Require \small A subsampled image $f|_{\Omega}$
\Ensure \small A recovered image $u$
\State \small Generate initial image $u^{0}$
\State \textbf{while} not converge \textbf{do}
\begin{algorithmic}[1]
\State  Generate the semi-local patch set $\Bar{P}(u^n)$ from current image $u^{n}$ and get corresponding labeled set $S^{n}\subset \Bar{P}(u^n)$
\State \small Update the image by computing $u^{n+1}$ on $P(u^n)$, with the known observation
\begin{displaymath}
u^{n+1}(\bm{x})=f(\bm{x}),\quad \bm{x}\in S^n.
\end{displaymath}
by Algorithm \ref{alg:cure}.
\State $n\leftarrow n+1.$
\end{algorithmic}
\textbf{end while}

\end{algorithmic}
$u=u^{n}$
\end{algorithm}

The weight function $w(\bm{x},\bm{y})$ is chosen as \eqref{e:weightfunction}. Here, $x,y\in \mathbb{R}^{d+2}$ are semi-local patches and $\sigma(\bm{x})$ is chosen to be the distance between $\bm{x}$ and its 20th nearest neighbor. To make the weight matrix sparse, the weight is truncated to the 50 nearest neighbors. In the semi-local patches, the local coordinate is normalized to have the same amplitude as the image intensity,
$$
(\Bar{P}\bm{u})(x)=[(P\bm{u})(x),\lambda \Bar{x}]
$$
with
$$
\Bar{x}=\left(\frac{x_1\|(f|_\Omega)\|_\infty}{m},\frac{x_2\|(f|_\Omega)\|_\infty}{n}\right),
$$
where $x=(x_1,x_2)$ and $m,n$ are the size of the image. The purpose of introducing semi-local patches is to constrain the search space to a local area. The larger $\lambda$ leads to smaller search space making the searching quicker, while smaller $\lambda$ leads to global search and make more accurate results. Thus following \cite{shi2017weighted} we gradually reduce $\lambda$  by $\lambda^{k+1}=\max(\lambda^k-1,3)$ and initialization $\lambda=10$.

We apply our algorithm to  12 widely used test images. In our experiment, we select the patch size to be $11\times 11$. For each patch, the nearest neighbors are
obtained by using an approximate nearest neighbor (ANN) search algorithm.
We use a k-d tree approach as well as an ANN search algorithm to reduce
the computational cost. The linear system in weighted nonlocal Laplacian and
graph Laplacian is solved by the conjugate gradient method. We use the solution of WNLL after 6 steps as the initialization of our algorithm to get a proper initial guess of the similarity relationships between different groups. The initial image of WNLL is obtained by filling the missing pixels with random
numbers which satisfy a Gaussian distribution, where $\mu_0$ is the mean of $f|_{\Omega}$
and $\sigma_0$ is the standard deviation of $f|_{\Omega}$.\\

\begin{figure}
\centering
\includegraphics[scale=0.1]{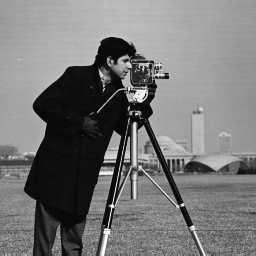}
\includegraphics[scale=0.1]{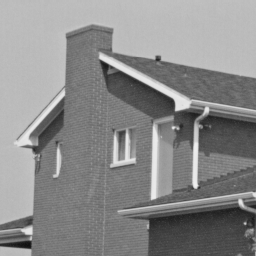}
\includegraphics[scale=0.1]{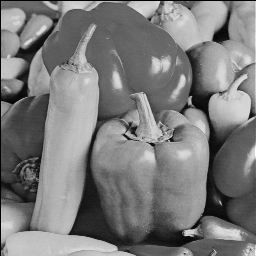}
\includegraphics[scale=0.1]{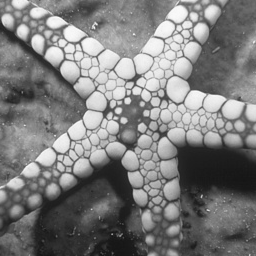}
\includegraphics[scale=0.1]{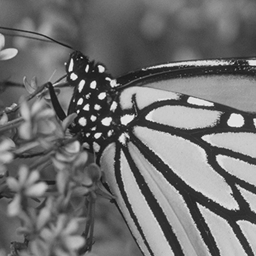}
\includegraphics[scale=0.1]{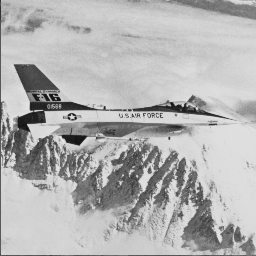}
\includegraphics[scale=0.1]{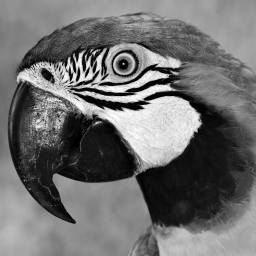}
\includegraphics[scale=0.05]{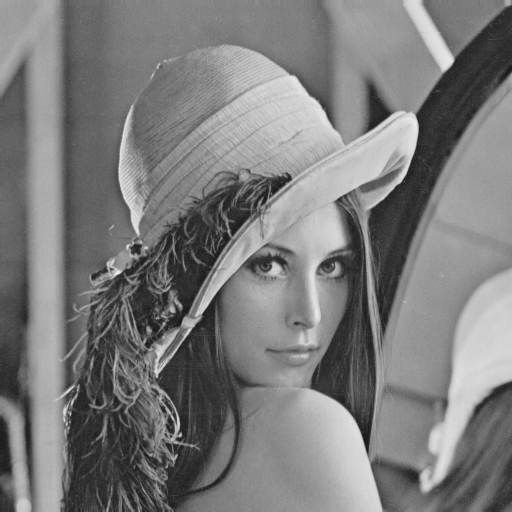}
\includegraphics[scale=0.05]{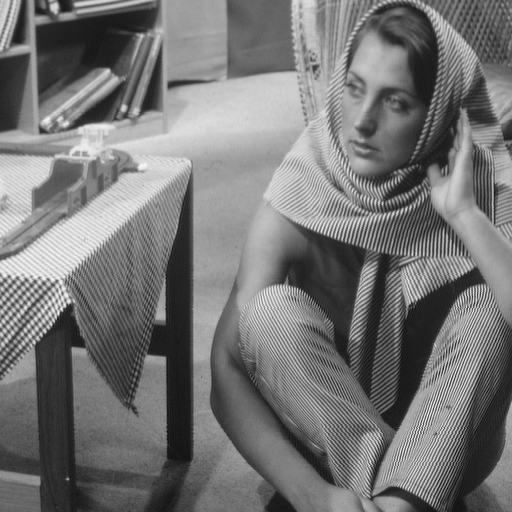}
\includegraphics[scale=0.8]{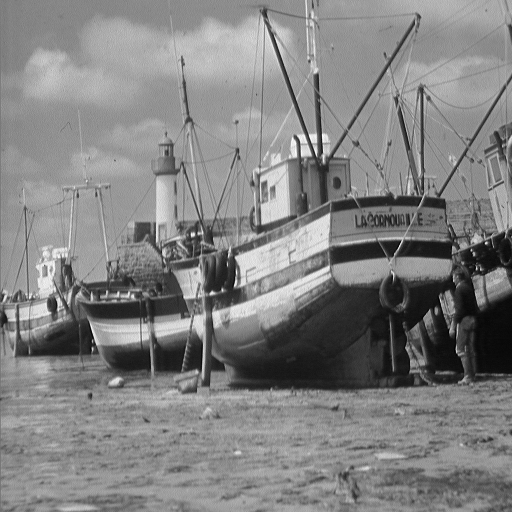}
\includegraphics[scale=0.05]{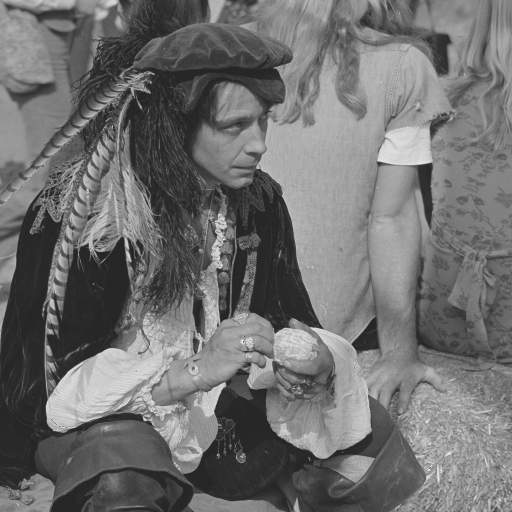}
\includegraphics[scale=0.05]{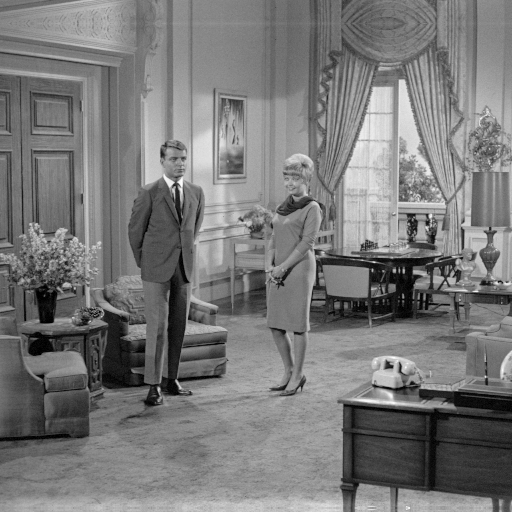}
\caption{\scriptsize{Set12: 12 widely used testing images.}}
\end{figure}

\indent Quality of the restored images is measured by PSNR and SSIM. PSNR is defined as
\begin{equation}
{\rm PSNR}(f,f^*) = -20\ {\rm log}_{10}(\left\|f-f^*\right\|/255)
\end{equation}
where $f^{*}$ is the ground truth. SSIM is defined as a multiplication of three terms that quantifies similarity of luminance, contrast and structure. It takes the following form
\begin{equation}
SSIM(x,y)=[l(x,y)]^\alpha\cdot [c(x,y)]^\beta\cdot [s(x,y)]^\gamma,
\end{equation}
where
\begin{equation}
l(x,y)=\frac{2\mu_x\mu_y+C_1}{\mu_x^2+\mu_y^2+C_1},c(x,y)=\frac{2\sigma_x\sigma_y+C_2}{\sigma_x^2+\sigma_y^2+C_2},s(x,y)=\frac{\sigma_{xy}+C_3}{\sigma_x\sigma_y+C_3},
\end{equation}
where $\mu_{x},\mu_{y},\sigma_{x},\sigma_{x}$ and $\sigma_{xy}$ are the local means, standard deviations and cross-covariance for image $x,y$.\\

\indent The numerical results are shown in Table \ref{table:1} and Table \ref{table:2}. For qualitative comparisons, Figure \ref{fig1} shows the inpainting results of 3 images from Set12 dataset at $15\%$ sample rate. Figure \ref{fig2} shows the inpainting results at $20\%$ sample rate. As we can see, WeCURE gives much better results than WNLL both visually and in terms of PSNR and SSIM. We observe that (We)CURE can well recover texture and preserve sharp image features such as edges, though it also introduces mild artifacts in smooth regions. This is why (We)CURE significantly outperforms WNLL in terms of SSIM.

\begin{table}[H]

\resizebox{\textwidth}{30mm}{
\begin{tabular}{@{}|c|cccccccccccc||c|@{}}
\toprule
Images      & C.man                          & House                          & Peppers                        & Starfish                       & Monarch                        & Airplane                       & Parrot                         & Lena                           & Barbara                        & Boat                           & Man                            & Couple                         & Average                        \\ \midrule
Sample Rate & \multicolumn{13}{c|}{10\%}                                                                                                                                                                                                                                                                                                                                                                                                                 \\ \midrule
LDMM          & 19.9329                        & 24.8723                        & 20.6103                        & 19.9285                        & 19.3395                        & 19.9612                        & 19.5449                        & 26.1005                        & 23.3176                        & 22.6681                        & 23.9415                        & 22.7225                        & 21.9117                        \\ \midrule
WNLL        & {\color[HTML]{FF0000} \textbf{21.9993}} & {\color[HTML]{0000FF} \underline{28.3325}} & 23.3210                        & {\color[HTML]{0000FF} \underline{22.2705}} & {\color[HTML]{0000FF} \underline{22.4218}} & {\color[HTML]{FF0000} \textbf{21.7954}} & {\color[HTML]{0000FF} \underline{21.6121}} & {\color[HTML]{0000FF} \underline{28.5089}} & {\color[HTML]{0000FF} \underline{26.3732}} & {\color[HTML]{0000FF} \underline{24.8116}} & {\color[HTML]{FF0000} \textbf{25.8126}} & {\color[HTML]{0000FF} \underline{25.0263}} & {\color[HTML]{0000FF} \underline{24.3571}} \\ \midrule
CURE         & 21.7095                        & 28.3023                        & {\color[HTML]{0000FF} \underline{23.3315}} & 22.0185                        & 22.0650                        & 21.4078                        & 21.5080                        & 28.3013                        & 26.3031                        & 24.6798                        & {\color[HTML]{0000FF} \underline{25.7207}} & 24.9033                        & 24.1876                        \\ \midrule
WeCURE        & {\color[HTML]{0000FF} \underline{21.8571}} & {\color[HTML]{FF0000}\textbf{ 28.7967}} & {\color[HTML]{FF0000} \textbf{23.7416}} & {\color[HTML]{FF0000} \textbf{22.3540}} & {\color[HTML]{FF0000} \textbf{22.5829}} & {\color[HTML]{0000FF} \underline{21.4335}} & {\color[HTML]{FF0000} \textbf{21.7753}} & {\color[HTML]{FF0000} \textbf{28.7926}} & {\color[HTML]{FF0000} \textbf{26.7155}} & {\color[HTML]{FF0000} \textbf{25.0060}} & 25.7145                        & {\color[HTML]{FF0000} \textbf{25.1940}} & {\color[HTML]{FF0000} \textbf{24.4970}} \\ \midrule
Sample Rate & \multicolumn{13}{c|}{15\%}                                                                                                                                                                                                                                                                                                                                                                                                                 \\ \midrule
LDMM         & 21.0948                        & 26.4075                        & 21.6434                        & 20.9887                        & 20.9843                        & 21.0712                        & 21.3412                        & 27.7591                        & 25.6175                        & 23.8791                        & 25.1269                        & 24.0065                        & 23.3267                        \\ \midrule
WNLL        & {\color[HTML]{FF0000} \textbf{23.3052}} & 29.1647                        & 25.0635                        & {\color[HTML]{0000FF} \underline{23.5147}} & 23.7171                        & {\color[HTML]{0000FF} \underline{22.7292}} & {\color[HTML]{0000FF} \underline{22.5851}} & 29.5856                        & {\color[HTML]{0000FF} \underline{27.7837}} & {\color[HTML]{0000FF} \underline{25.8633}} & {\color[HTML]{0000FF} \underline{26.9433}} & {\color[HTML]{0000FF} \underline{26.2245}} & {\color[HTML]{0000FF} \underline{25.5400}} \\ \midrule
CURE         & 22.8514                        & {\color[HTML]{0000FF} \underline{29.5745}} & {\color[HTML]{0000FF} \underline{25.1007}} & 23.4509                        & {\color[HTML]{0000FF} \underline{23.8326}} & 22.5211                        & 22.4579                        & {\color[HTML]{0000FF} \underline{29.6253}} & 27.7315                        & 25.7653                        & 26.9278                        & 26.1798                        & 25.5016                        \\ \midrule
WeCURE        & {\color[HTML]{0000FF} \underline{23.0993}} & {\color[HTML]{FF0000} \textbf{30.9540}} & {\color[HTML]{FF0000} \textbf{25.7840}} & {\color[HTML]{FF0000} \textbf{24.0722}} & {\color[HTML]{FF0000} \textbf{24.2587}} & {\color[HTML]{FF0000} \textbf{22.8246}} & {\color[HTML]{FF0000} \textbf{22.8708}} & {\color[HTML]{FF0000} \textbf{30.1331}} & {\color[HTML]{FF0000} \textbf{28.5615}} & {\color[HTML]{FF0000} \textbf{26.2943}} & {\color[HTML]{FF0000} \textbf{27.3484}} & {\color[HTML]{FF0000} \textbf{26.7266}} & {\color[HTML]{FF0000} \textbf{26.0773}} \\ \midrule
Sample Rate & \multicolumn{13}{c|}{20\%}                                                                                                                                                                                                                                                                                                                                                                                                                 \\ \midrule
LDMM          & 21.9057                        & 28.2924                        & 22.7767                        & 22.6264                        & 22.4175                        & 22.1073                        & 21.9409                        & 28.9160                        & 26.8121                        & 24.8777                        & 26.2350                        & 25.0044                        & 24.4927                        \\ \midrule
WNLL        & {\color[HTML]{0000FF} \underline{23.9478}} & 30.8222                        & {\color[HTML]{0000FF} \underline{25.8068}} & 24.5382                        & 24.6738                        & {\color[HTML]{0000FF} \underline{23.8359}} & 23.2844                        & 30.5140                        & 28.7357                        & 26.6614                        & 27.7806                        & 26.7532                        & 26.4462                        \\ \midrule
CURE         & 23.7846                        & {\color[HTML]{0000FF} \underline{31.4606}} & 25.7513                        & {\color[HTML]{0000FF} \underline{24.7232}} & {\color[HTML]{0000FF} \underline{24.8360}} & 23.7147                        & {\color[HTML]{0000FF} \underline{23.5282}} & {\color[HTML]{0000FF} \underline{30.6271}} & {\color[HTML]{0000FF} \underline{28.9715}} & {\color[HTML]{0000FF} \underline{26.6736}} & {\color[HTML]{0000FF} \underline{27.8198}} & {\color[HTML]{0000FF} \underline{26.8165}} & {\color[HTML]{0000FF} \underline{26.5589}} \\ \midrule
WeCURE        &{\color[HTML]{FF0000} \textbf{24.5007}} & {\color[HTML]{FF0000} \textbf{32.1789}} & {\color[HTML]{FF0000} \textbf{26.6428}} & {\color[HTML]{FF0000} \textbf{25.3982}} & {\color[HTML]{FF0000} \textbf{25.5151}} & {\color[HTML]{FF0000} \textbf{24.1406}} & {\color[HTML]{FF0000} \textbf{24.0625}} & {\color[HTML]{FF0000} \textbf{31.3711}} & {\color[HTML]{FF0000} \textbf{29.7794}} & {\color[HTML]{FF0000} \textbf{27.3033}} & {\color[HTML]{FF0000} \textbf{28.3473}} & {\color[HTML]{FF0000} \textbf{27.4934}} & {\color[HTML]{FF0000} \textbf{27.2278}} \\ \bottomrule
\end{tabular}
}
\caption{\scriptsize{The PSNR(dB) results of different methods on Set12 dataset with sampling rate $10\%$, $15\%$ and $20\%$. The best results are indicated in \textcolor{red}{red} and are highlighted in bold.
The second best results are indicated in \textcolor{blue}{blue} and are highlighted by underline.
}}
\label{table:1}
\end{table}

\begin{table}[H]
\resizebox{\textwidth}{30mm}{
\begin{tabular}{@{}|c|cccccccccccc||c|@{}}
\toprule
Images      & C.man                         & House                         & Peppers                       & Starfish                      & Monarch                       & Airplane                      & Parrot                        & Lena                          & Barbara                       & Boat                          & Man                           & Couple                        & Average                       \\ \midrule
Sample Rate & \multicolumn{13}{c|}{10\%}                                                                                                                                                                                                                                                                                                                                                                                                    \\ \midrule
LDMM          & 0.2677                        & 0.3406                        & 0.4406                        & 0.3856                        & 0.4870                        & 0.3338                        & 0.4560                        & 0.4508                        & 0.4881                        & 0.3121                        & 0.3469                        & 0.3389                        & 0.3874                        \\ \midrule
WNLL        & {\color[HTML]{000000} 0.3557} & {\color[HTML]{000000} 0.4236} & {\color[HTML]{000000} 0.5681} & {\color[HTML]{0000FF} \underline{0.5415}} & {\color[HTML]{000000} 0.6523} & {\color[HTML]{0000FF} \underline{0.4352}} & {\color[HTML]{000000} 0.5680} & {\color[HTML]{000000} 0.5316} & {\color[HTML]{000000} 0.6308} & {\color[HTML]{000000} 0.4383} & {\color[HTML]{000000} 0.4787} & {\color[HTML]{000000} 0.5123} & {\color[HTML]{000000} 0.5113} \\ \midrule
CURE         & {\color[HTML]{0000FF} \underline{0.3591}} & {\color[HTML]{0000FF} \underline{0.4337}} & {\color[HTML]{0000FF} \underline{0.5849}} & {\color[HTML]{000000} 0.5382} & {\color[HTML]{0000FF} \underline{0.6537}} & {\color[HTML]{000000} 0.4324} & {\color[HTML]{0000FF} \underline{0.5733}} & {\color[HTML]{0000FF} \underline{0.5356}} & {\color[HTML]{0000FF} \underline{0.6392}} & {\color[HTML]{0000FF} \underline{0.4409}} & {\color[HTML]{0000FF} \underline{0.4817}} & {\color[HTML]{0000FF} \underline{0.5240}} & {\color[HTML]{0000FF} \underline{0.5164}} \\ \midrule
WeCURE        & {\color[HTML]{FF0000} \textbf{0.3726}} & {\color[HTML]{FF0000} \textbf{0.4397}} & {\color[HTML]{FF0000} \textbf{0.6042}} & {\color[HTML]{FF0000} \textbf{0.5721}} & {\color[HTML]{FF0000} \textbf{0.6842}} & {\color[HTML]{FF0000} \textbf{0.4448}} & {\color[HTML]{FF0000} \textbf{0.5953}} & {\color[HTML]{FF0000} \textbf{0.5402}} & {\color[HTML]{FF0000} \textbf{0.6572}} & {\color[HTML]{FF0000} \textbf{0.4628}} & {\color[HTML]{FF0000} \textbf{0.5051}} & {\color[HTML]{FF0000} \textbf{0.5476}} & {\color[HTML]{FF0000} \textbf{0.5355}} \\ \midrule
Sample Rate & \multicolumn{13}{c|}{15\%}                                                                                                                                                                                                                                                                                                                                                                                                    \\ \midrule
LDMM          & 0.3622                        & 0.4288                        & 0.5308                        & 0.4848                        & 0.5986                        & 0.4252                        & 0.5464                        & 0.5382                        & 0.6164                        & 0.4187                        & 0.4483                        & 0.4619                        & 0.4884                        \\ \midrule
WNLL        & {\color[HTML]{000000} 0.4456} & {\color[HTML]{000000} 0.5053} & {\color[HTML]{000000} 0.6380} & {\color[HTML]{000000} 0.6196} & {\color[HTML]{000000} 0.7076} & {\color[HTML]{000000} 0.5052} & {\color[HTML]{000000} 0.6247} & {\color[HTML]{000000} 0.5931} & {\color[HTML]{000000} 0.6964} & {\color[HTML]{000000} 0.5130} & {\color[HTML]{000000} 0.5544} & {\color[HTML]{000000} 0.5911} & {\color[HTML]{000000} 0.5828} \\ \midrule
CURE        & {\color[HTML]{0000FF} \underline{0.4464}} & {\color[HTML]{0000FF} \underline{0.5294}} & {\color[HTML]{0000FF} \underline{0.6610}} & {\color[HTML]{0000FF} \underline{0.6294}} & {\color[HTML]{0000FF} \underline{0.7299}} & {\color[HTML]{0000FF} \underline{0.5115}} & {\color[HTML]{0000FF} \underline{0.6435}} & {\color[HTML]{0000FF} \underline{0.5994}} & {\color[HTML]{0000FF} \underline{0.7068}} & {\color[HTML]{0000FF} \underline{0.5226}} & {\color[HTML]{0000FF} \underline{0.5637}} & {\color[HTML]{0000FF} \underline{0.6067}} & {\color[HTML]{0000FF} \underline{0.5959}} \\ \midrule
WeCURE        & {\color[HTML]{FF0000} \textbf{0.4577}} & {\color[HTML]{FF0000} \textbf{0.5459}} & {\color[HTML]{FF0000} \textbf{0.6766}} & {\color[HTML]{FF0000} \textbf{0.6658}} & {\color[HTML]{FF0000} \textbf{0.7473}} & {\color[HTML]{FF0000} \textbf{0.5273}} & {\color[HTML]{FF0000} \textbf{0.6621}} & {\color[HTML]{FF0000} \textbf{0.6102}} & {\color[HTML]{FF0000} \textbf{0.7275}} & {\color[HTML]{FF0000} \textbf{0.5462}} & {\color[HTML]{FF0000} \textbf{0.5939}} & {\color[HTML]{FF0000} \textbf{0.6308}} & {\color[HTML]{FF0000} \textbf{0.6159}} \\ \midrule
Sample Rate & \multicolumn{13}{c|}{20\%}                                                                                                                                                                                                                                                                                                                                                                                                    \\ \midrule
LDMM         & {\color[HTML]{000000} 0.4385} & {\color[HTML]{000000} 0.5148} & {\color[HTML]{000000} 0.5980} & {\color[HTML]{000000} 0.5783} & {\color[HTML]{000000} 0.6692} & {\color[HTML]{000000} 0.5003} & {\color[HTML]{000000} 0.6074} & {\color[HTML]{000000} 0.5997} & {\color[HTML]{000000} 0.6840} & {\color[HTML]{000000} 0.5003} & {\color[HTML]{000000} 0.5295} & {\color[HTML]{000000} 0.5501} & 0.5642                        \\ \midrule
WNLL        & {\color[HTML]{000000} 0.4970} & {\color[HTML]{000000} 0.5735} & {\color[HTML]{000000} 0.6856} & {\color[HTML]{000000} 0.6691} & {\color[HTML]{000000} 0.7439} & {\color[HTML]{000000} 0.5684} & {\color[HTML]{000000} 0.6673} & {\color[HTML]{000000} 0.6376} & {\color[HTML]{000000} 0.7373} & {\color[HTML]{000000} 0.5722} & {\color[HTML]{000000} 0.6062} & {\color[HTML]{000000} 0.6364} & 0.6329                        \\ \midrule
CURE         & {\color[HTML]{0000FF} \underline{0.5063}} & {\color[HTML]{0000FF} \underline{0.6044}} & {\color[HTML]{0000FF} \underline{0.7051}} & {\color[HTML]{0000FF} \underline{0.6889}} & {\color[HTML]{0000FF} \underline{0.7687}} & {\color[HTML]{0000FF} \underline{0.5847}} & {\color[HTML]{0000FF} \underline{0.6850}} & {\color[HTML]{0000FF} \underline{0.6457}} & {\color[HTML]{0000FF} \underline{0.7515}} & {\color[HTML]{0000FF} \underline{0.5882}} & {\color[HTML]{0000FF} \underline{0.6203}} & {\color[HTML]{0000FF} \underline{0.6571}} & {\color[HTML]{0000FF} \underline{0.6505}} \\ \midrule
WeCURE        & {\color[HTML]{FF0000} \textbf{0.5270}} & {\color[HTML]{FF0000} \textbf{0.6167}} & {\color[HTML]{FF0000} \textbf{0.7241}} & {\color[HTML]{FF0000} \textbf{0.7214}} & {\color[HTML]{FF0000} \textbf{0.7859}} & {\color[HTML]{FF0000} \textbf{0.6009}} & {\color[HTML]{FF0000} \textbf{0.7017}} & {\color[HTML]{FF0000} \textbf{0.6570}} & {\color[HTML]{FF0000} \textbf{0.7683}} & {\color[HTML]{FF0000} \textbf{0.6093}} & {\color[HTML]{FF0000} \textbf{0.6492}} & {\color[HTML]{FF0000} \textbf{0.6806}} & {\color[HTML]{FF0000} \textbf{0.6702}} \\ \bottomrule
\end{tabular}}
\caption{\scriptsize{The SSIM results of different methods on Set12 dataset with sampling rate $10\%$, $15\%$ and $20\%$. The best results are indicated in \textcolor{red}{red} and are highlighted in bold.
The second best results are indicated in \textcolor{blue}{blue} and are highlighted by underline.}}
\label{table:2}
\end{table}

\newpage

\begin{figure}[H]
\centering
\tiny
\subfigure{
\begin{minipage}[t]{0.25\linewidth}
\centering
\includegraphics[width=1.3in]{02.png}
\end{minipage}
\begin{minipage}[t]{0.25\linewidth}
\centering
\includegraphics[width=1.3in]{04.png}
\end{minipage}
\begin{minipage}[t]{0.25\linewidth}
\centering
\includegraphics[width=1.3in]{05.png}
\end{minipage}
}
\subfigure{
\begin{minipage}[t]{0.25\linewidth}
\centering
\includegraphics[width=1.3in]{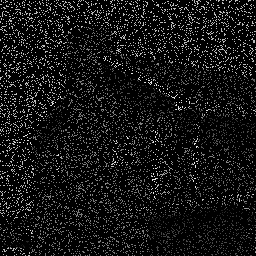}
\end{minipage}
\begin{minipage}[t]{0.25\linewidth}
\centering
\includegraphics[width=1.3in]{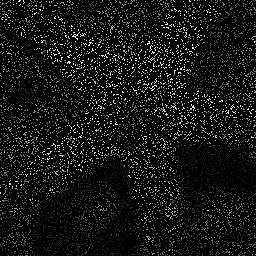}
\end{minipage}
\begin{minipage}[t]{0.25\linewidth}
\centering
\includegraphics[width=1.3in]{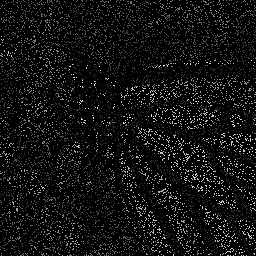}
\end{minipage}
}
\subfigure{
\begin{minipage}[t]{0.25\linewidth}
\centering
\includegraphics[width=1.3in]{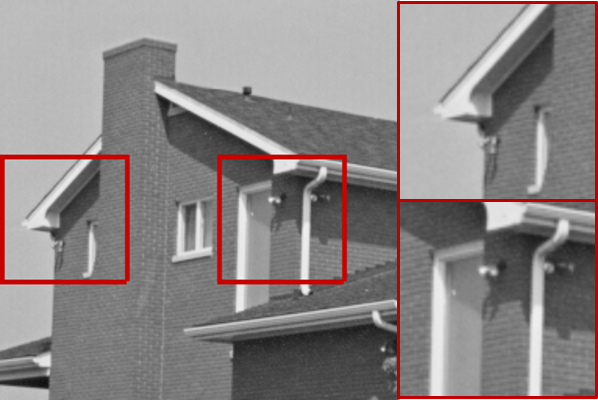}
\end{minipage}
\begin{minipage}[t]{0.25\linewidth}
\centering
\includegraphics[width=1.3in]{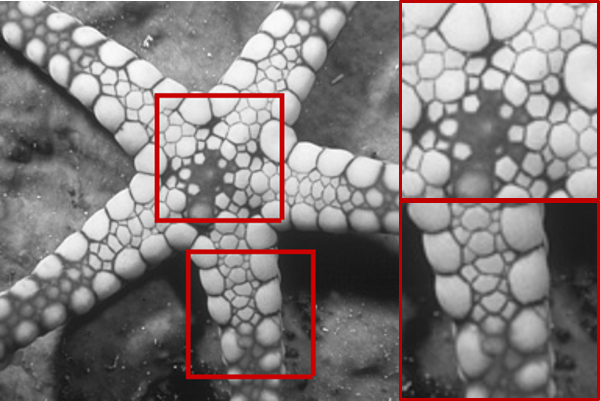}
\end{minipage}
\begin{minipage}[t]{0.25\linewidth}
\centering
\includegraphics[width=1.3in]{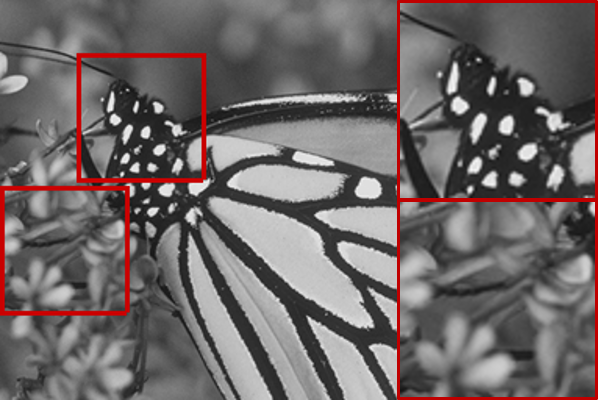}
\end{minipage}
}
\subfigure{
\begin{minipage}[t]{0.25\linewidth}
\centering
\includegraphics[width=1.3in]{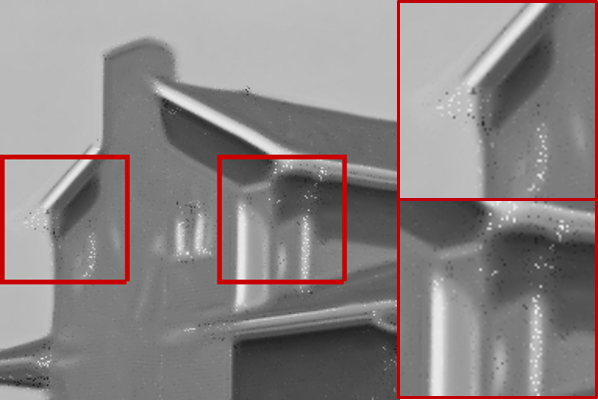}
\caption*{PSNR=26.41dB,SSIM=0.42}
\end{minipage}
\begin{minipage}[t]{0.25\linewidth}
\centering
\includegraphics[width=1.3in]{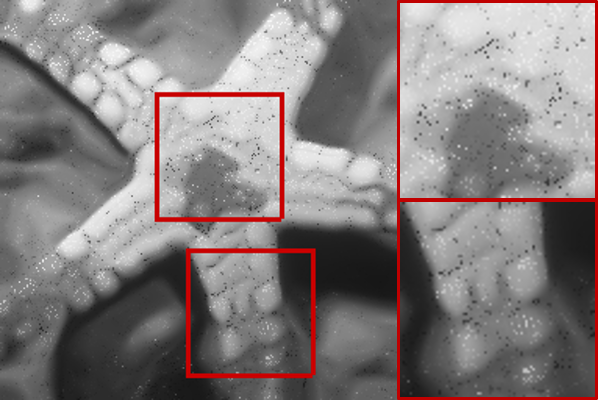}
\caption*{PSNR=20.99dB,SSIM=0.48}
\end{minipage}
\begin{minipage}[t]{0.25\linewidth}
\centering
\includegraphics[width=1.3in]{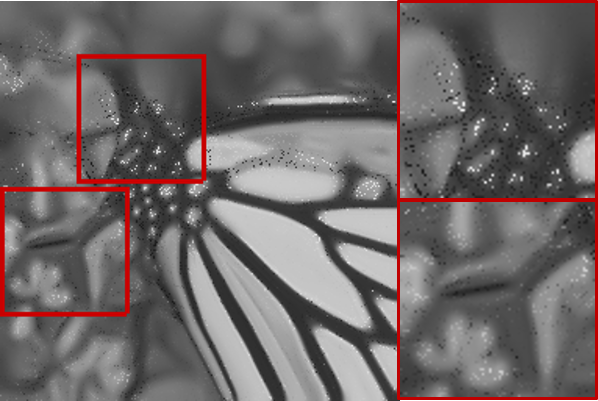}
\caption*{PSNR=20.98dB,SSIM=0.60}
\end{minipage}
}
\subfigure{
\begin{minipage}[t]{0.25\linewidth}
\centering
\includegraphics[width=1.3in]{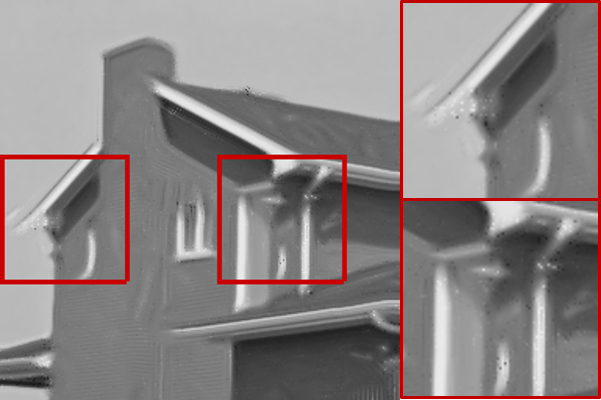}
\caption*{PSNR=29.16dB,SSIM=0.50}
\end{minipage}
\begin{minipage}[t]{0.25\linewidth}
\centering
\includegraphics[width=1.3in]{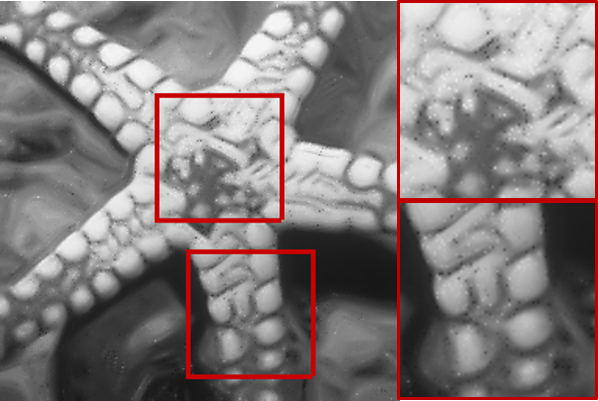}
\caption*{PSNR=23.51dB,SSIM=0.62}
\end{minipage}
\begin{minipage}[t]{0.25\linewidth}
\centering
\includegraphics[width=1.3in]{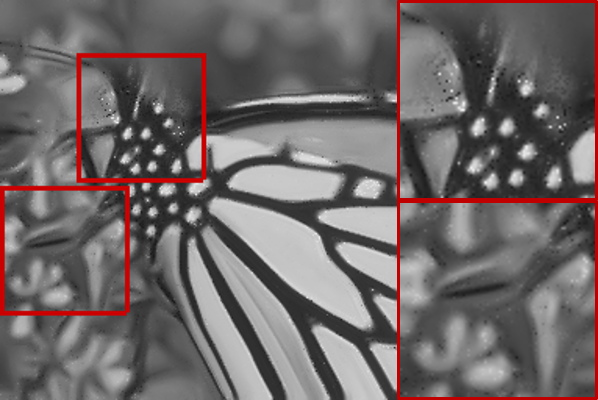}
\caption*{PSNR=23.71dB,SSIM=0.71}
\end{minipage}
}
\subfigure{
\begin{minipage}[t]{0.25\linewidth}
\centering
\includegraphics[width=1.3in]{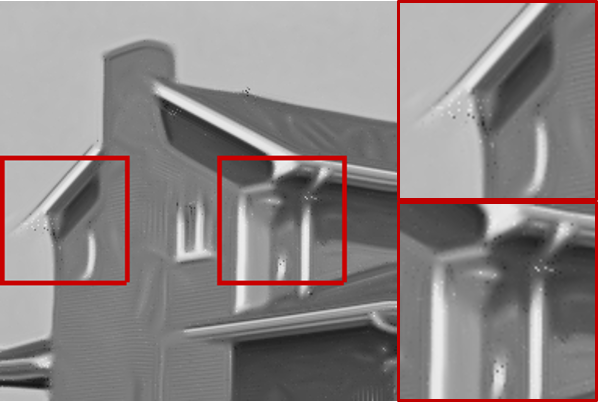}
\caption*{PSNR=29.57dB,SSIM=0.53}
\end{minipage}
\begin{minipage}[t]{0.25\linewidth}
\centering
\includegraphics[width=1.3in]{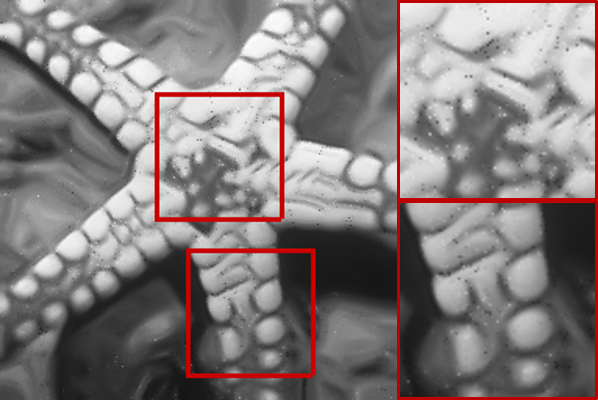}
\caption*{PSNR=23.45dB,SSIM=0.63}
\end{minipage}
\begin{minipage}[t]{0.25\linewidth}
\centering
\includegraphics[width=1.3in]{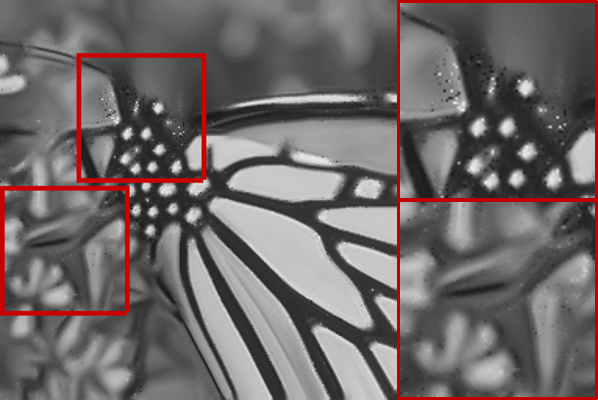}
\caption*{PSNR=23.83dB,SSIM=0.73}
\end{minipage}
}
\subfigure{
\begin{minipage}[t]{0.25\linewidth}
\centering
\includegraphics[width=1.3in]{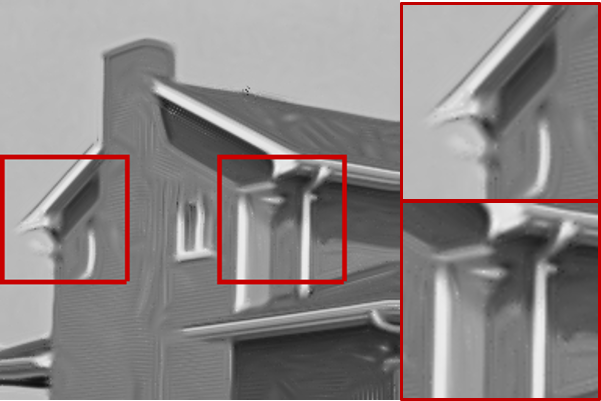}
\caption*{PSNR=30.95dB,SSIM=0.55}
\end{minipage}
\begin{minipage}[t]{0.25\linewidth}
\centering
\includegraphics[width=1.3in]{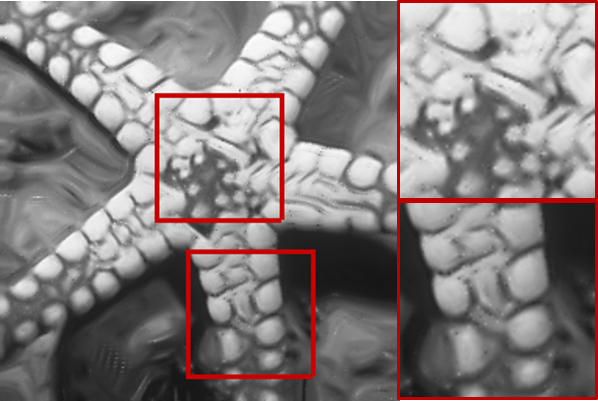}
\caption*{PSNR=24.07dB,SSIM=0.68}
\end{minipage}
\begin{minipage}[t]{0.25\linewidth}
\centering
\includegraphics[width=1.3in]{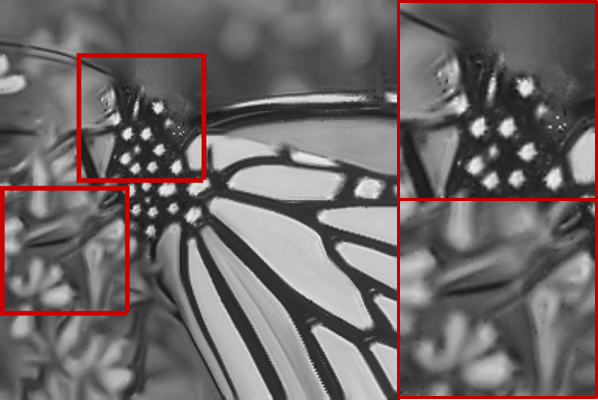}
\caption*{PSNR=24.26dB,SSIM=0.75}
\end{minipage}
}
\centering
\caption{From top to bottom: original image, $15\%$ subsample, ground-truth, LDMM, WNLL, CURE, WeCURE}
\label{fig1}
\end{figure}

\begin{figure}[H]
\centering

\subfigure{
\begin{minipage}[t]{0.25\linewidth}
\centering
\includegraphics[width=1.3in]{02.png}
\end{minipage}
\begin{minipage}[t]{0.25\linewidth}
\centering
\includegraphics[width=1.3in]{04.png}
\end{minipage}
\begin{minipage}[t]{0.25\linewidth}
\centering
\includegraphics[width=1.3in]{05.png}
\end{minipage}
}
\subfigure{
\begin{minipage}[t]{0.25\linewidth}
\centering
\includegraphics[width=1.3in]{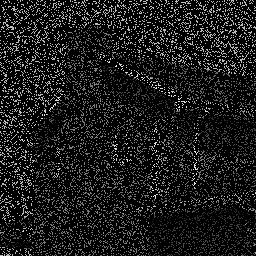}
\end{minipage}
\begin{minipage}[t]{0.25\linewidth}
\centering
\includegraphics[width=1.3in]{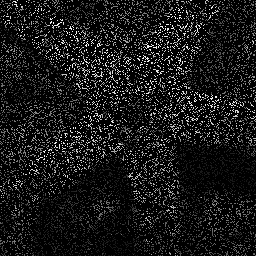}
\end{minipage}
\begin{minipage}[t]{0.25\linewidth}
\centering
\includegraphics[width=1.3in]{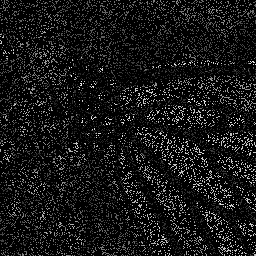}
\end{minipage}
}
\subfigure{
\begin{minipage}[t]{0.25\linewidth}
\centering
\includegraphics[width=1.3in]{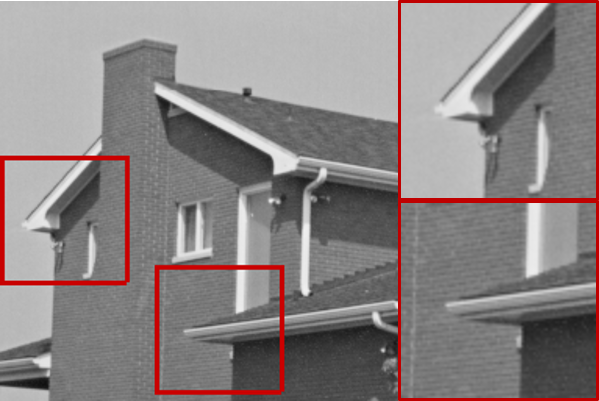}
\end{minipage}
\begin{minipage}[t]{0.25\linewidth}
\centering
\includegraphics[width=1.3in]{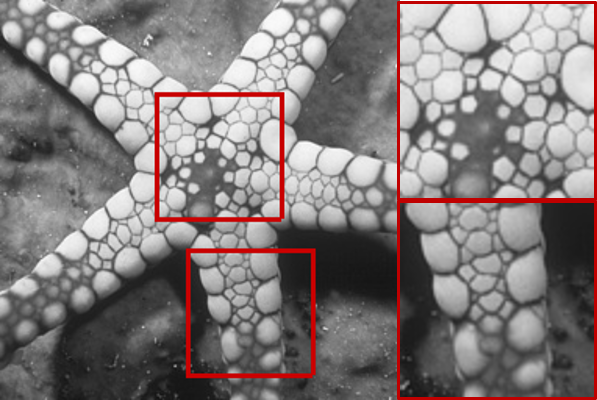}
\end{minipage}
\begin{minipage}[t]{0.25\linewidth}
\centering
\includegraphics[width=1.3in]{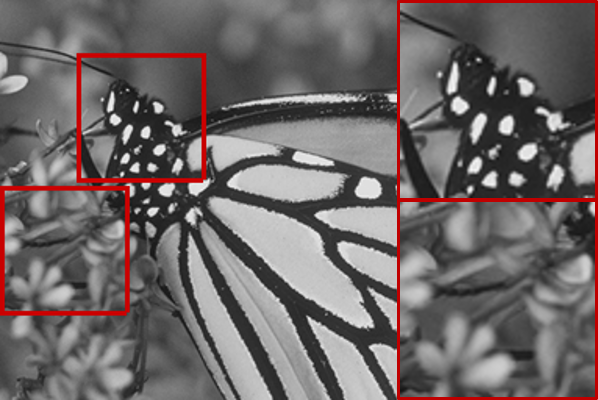}
\end{minipage}
}
\subfigure{
\begin{minipage}[t]{0.25\linewidth}
\centering
\includegraphics[width=1.3in]{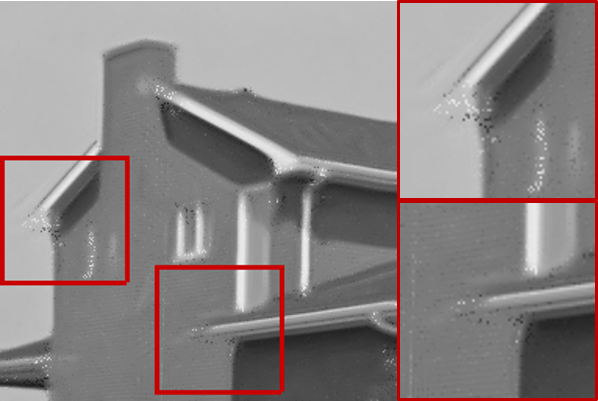}
\caption*{PSNR=28.29dB,SSIM=0.51}
\end{minipage}
\begin{minipage}[t]{0.25\linewidth}
\centering
\includegraphics[width=1.3in]{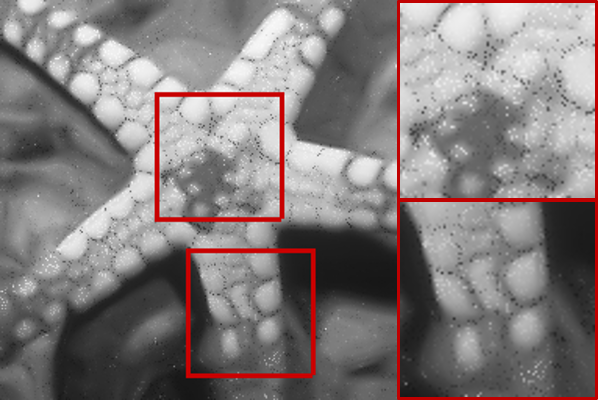}
\caption*{PSNR=22.63dB,SSIM=0.58}
\end{minipage}
\begin{minipage}[t]{0.25\linewidth}
\centering
\includegraphics[width=1.3in]{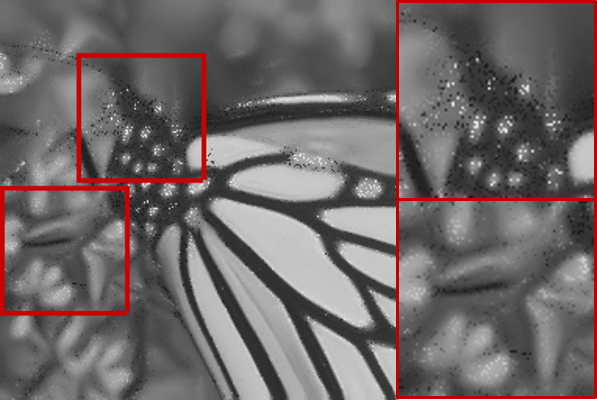}
\caption*{PSNR=22.42dB,SSIM=0.67}
\end{minipage}
}
\subfigure{
\begin{minipage}[t]{0.25\linewidth}
\centering
\includegraphics[width=1.3in]{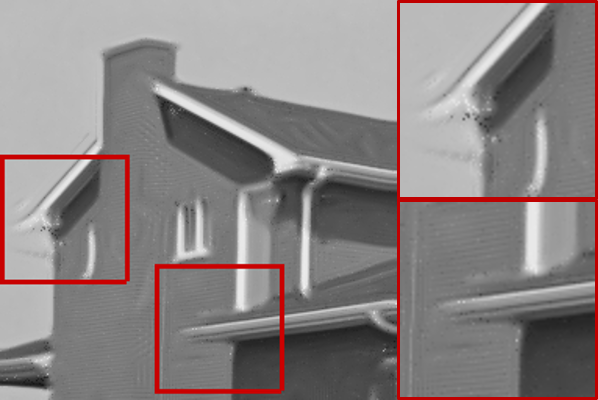}
\caption*{PSNR=30.82dB,SSIM=0.57}
\end{minipage}
\begin{minipage}[t]{0.25\linewidth}
\centering
\includegraphics[width=1.3in]{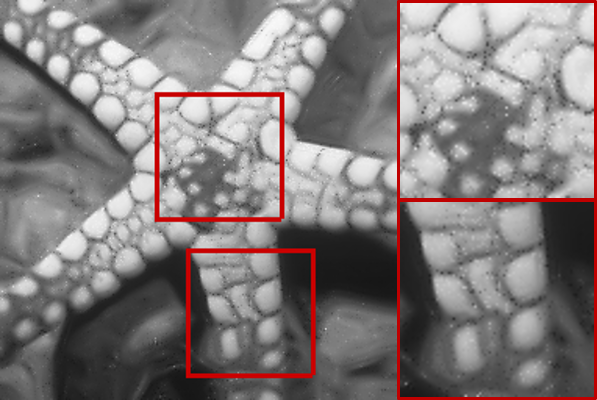}
\caption*{PSNR=24.54dB,SSIM=0.67}
\end{minipage}
\begin{minipage}[t]{0.25\linewidth}
\centering
\includegraphics[width=1.3in]{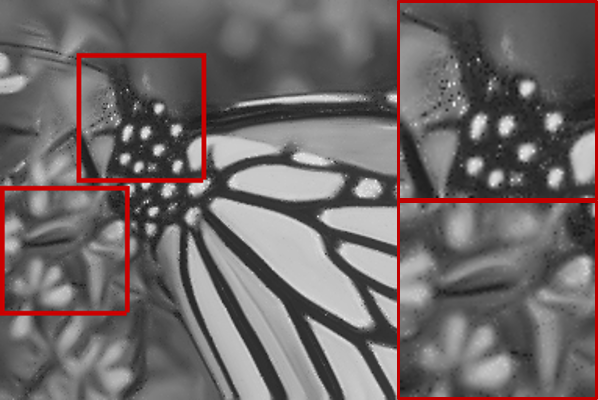}
\caption*{PSNR=24.67dB,SSIM=0.74}
\end{minipage}
}
\subfigure{
\begin{minipage}[t]{0.25\linewidth}
\centering
\includegraphics[width=1.3in]{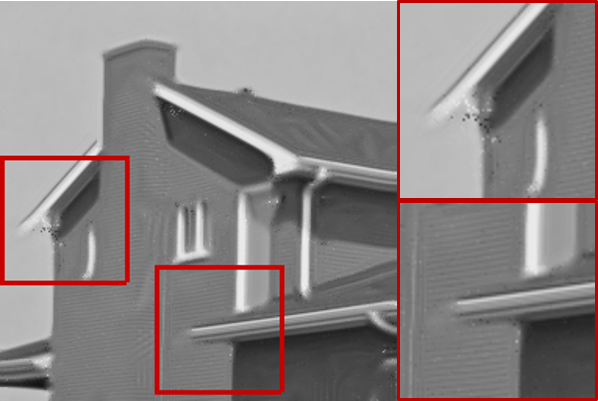}
\caption*{PSNR=31.46dB,SSIM=0.60}
\end{minipage}
\begin{minipage}[t]{0.25\linewidth}
\centering
\includegraphics[width=1.3in]{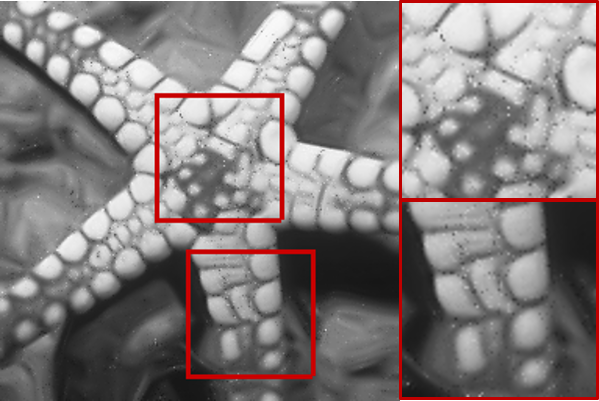}
\caption*{PSNR=24.72dB,SSIM=0.68}
\end{minipage}
\begin{minipage}[t]{0.25\linewidth}
\centering
\includegraphics[width=1.3in]{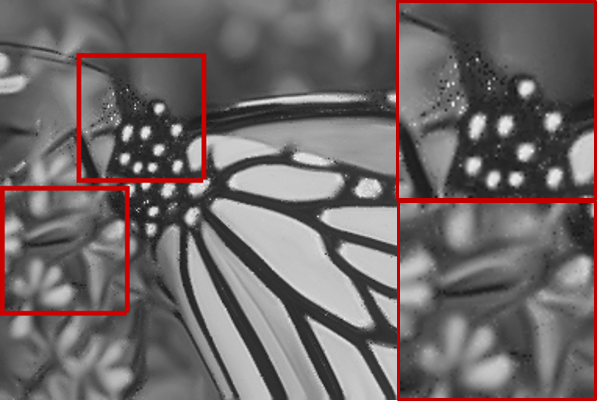}
\caption*{PSNR=24.83dB,SSIM=0.77}
\end{minipage}
}
\subfigure{
\begin{minipage}[t]{0.25\linewidth}
\centering
\includegraphics[width=1.3in]{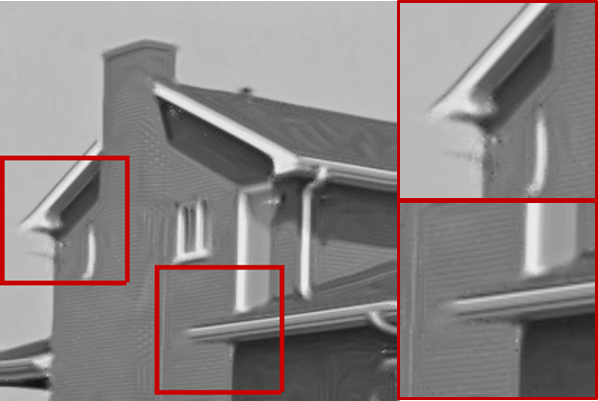}
\caption*{PSNR=32.18dB,SSIM=0.62}
\end{minipage}
\begin{minipage}[t]{0.25\linewidth}
\centering
\includegraphics[width=1.3in]{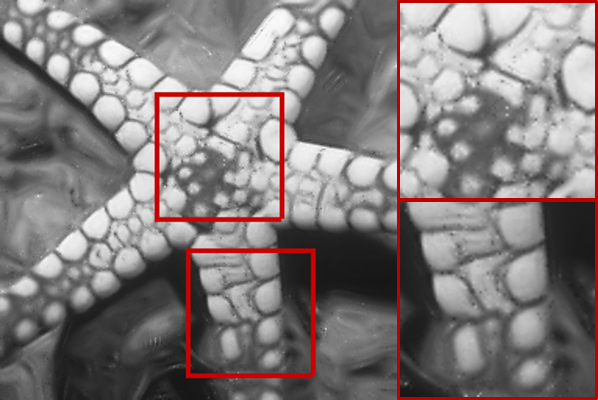}
\caption*{PSNR=25.39dB,SSIM=0.72}
\end{minipage}
\begin{minipage}[t]{0.25\linewidth}
\centering
\includegraphics[width=1.3in]{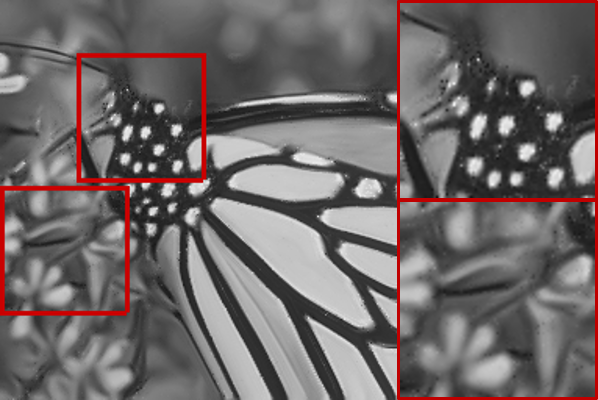}
\caption*{PSNR=25.51dB,SSIM=0.79}
\end{minipage}
}
\centering
\caption{From top to bottom: original image, $20\%$ subsample, ground-truth, LDMM, WNLL, CURE, WeCURE}
\label{fig2}
\end{figure}

\newpage

\section{Asymptotic Analysis}
\label{sec:ana}
In this section, we aim to provide an asymptotic analysis of the proposed numerical scheme for WeCURE model using $\Gamma$-convergence. The idea of the proof is sketched as follows. We first fix the bandwidth of the kernel and consider our scheme as an integral scheme of a non-local functional. Then, we reduce the bandwidth of the kernel to zero to show that the non-local functional is a good approximation to the original WeCURE functional. The proof mostly follows the notation and general idea of \cite{trillos2016continuum,trillos2018variational,trillos2018error}. A recent paper \cite{dunlop2019large} also established a $\Gamma$-convergence proof of the Biharmonic equation. The difference between their paper and ours is mainly the setting of the problem. In their paper, labeled data is considered as the boundary condition, while in our paper, we also consider the labeled data as samples from the data distribution and the rate of the number of labeled and unlabeled data is a fixed factor. In this setting, we will show that weights of WeCURE are crucial to achieving convergence.

Let $P=\{x_1,x_2,\cdots,x_n\}$ and $x_{i} (1\leq i\leq n)$ be uniformly sampled from $\Omega$, where $\Omega$ is an open bounded domain in $\mathbb{R}^{d}$. Let $\{x_{i_1},x_{i_2},\cdots,x_{i_m}\}$ be the set of labeled points where $x_{i_j}(1\leq j\leq m)$ is uniformly sampled from $P$. In this paper, we consider the ratio $\gamma =\frac{n}{m}$ to be fixed. Let $b:\Omega\rightarrow\mathbb{R}$ be a function whose value is only known at the labeled points. The empirical measure of data points is given by $\mu_{n}=\frac{1}{n}\sum_{i=1}^{n}\delta_{x_i}$. We consider a graph with vertices $V=P$ and denote the weights of the edges as $W_{ij}=\eta_\varepsilon(x_i-x_j)$ where $\eta_\varepsilon(x):=\eta_\varepsilon(|x|)=\frac{1}{\varepsilon^d}\eta(\frac{|x|}{\varepsilon}),\ \eta: [0,+\infty)\rightarrow[0,+\infty)$ is a radially symmetric function which satisfies the following assumptions:\\
\textbf{(A1)} $\eta(0)>0$ and $\eta$ is continuous at 0.\\
\textbf{(A2)} $\eta$ is non-increasing.\\
\textbf{(A3)} $\eta$ has compact support. If $|r|>\alpha$, then $\eta(r)=0$.

The discrete WeCURE model is given by (the weight is $\gamma -1$, not $\gamma$ in previous sections)
\begin{subequations}
\begin{align*}
&WeCURE_{n,\varepsilon}(u)\\
&=\frac{1}{\varepsilon^2}\frac{1}{(n-m)^2}\sum_{i, j \in P\backslash S}W_{ij}(u(x_i)-u(x_j))^2\\
&+\frac{2}{(n-m)m}\sum_{i\in S, j\in P\backslash S}W_{ij}(u(x_i)-b(x_j))^2+\frac{\lambda}{\varepsilon^4(n-m)^3}\sum_{i\in P\backslash S}\left(\sum_{j\in P\backslash S}W_{ij}(u(x_i)-u(x_j))\right)^2\label{eq:disb}\\
&+\frac{2}{\varepsilon^2(n-m)^2m(\gamma-1)}\sum_{i\in P\backslash S}\left(\sum_{j\in P\backslash S}W_{ij}(u(x_i)-u(x_j))\sum_{j\in S}W_{i,j}(u(x_i)-b(x_j))\right)\\
&+\frac{1}{(n-m)m^2(\gamma-1)^2}\sum_{i\in P\backslash S}\left(\sum_{j\in S}W_{ij}(u(x_i)-b(x_j))\right)^2+\frac{1}{(n-m)^2m}\sum_{i\in S}\left(\sum_{j\in P\backslash S}W_{ij}(b(x_i)-u(x_j))\right)^2\\
&+\frac{2}{\varepsilon^2(n-m)m^2(\gamma-1)}\sum_{i\in S}\left(\sum_{j\in P\backslash S}W_{ij}(b(x_i)-u(x_j))\sum_{j\in S}W_{ij}(b(x_i)-b(x_j))\right)
\end{align*}
\end{subequations}
The continuum nonlocal WeCURE model is given by
\begin{subequations}
\begin{align}
WeCURE_{\varepsilon}(u)&=\frac{1}{\varepsilon^2}\int_{\Omega}\int_{\Omega}\eta_\varepsilon(x-y)(u(x)-u(y))^2dydx\label{n1}\\
&+2\int_{\Omega}\int_{\Omega}\eta_\varepsilon(x-y)(u(x)-b(y))^2dydx\label{n2} \\
&+\lambda(\frac{1}{\varepsilon^4}\int_{\Omega}(\int_{\Omega}\eta_\varepsilon(x-y)(u(x)-u(y))dy)^2dx\label{n3}\\
&+\frac{2}{\varepsilon^2}\frac{1}{\gamma-1}\int_{\Omega}(\int_{\Omega}\eta_\varepsilon(x-y)(u(x)-u(y))dy\int_{\Omega}\eta_\varepsilon(x-y)(u(x)-b(y))dy)dx\label{n4}\\
&+\frac{1}{(\gamma-1)^2}\int_{\Omega}(\int_{\Omega}\eta_\varepsilon(x-y)(u(x)-b(y))dy)^2dx\label{n5}\\
&+\int_{\Omega}(\int_{\Omega}\eta_\varepsilon(x-y)(b(x)-u(y))dy)^2dx\label{n6}\\
&+\frac{2}{\varepsilon^2}\frac{1}{\gamma-1}\int_{\Omega}(\int_{\Omega}\eta_\varepsilon(x-y)(b(x)-u(y))dy\int_{\Omega}\eta_\varepsilon(x-y)(b(x)-b(y))dy)dx)\label{n7}
\end{align}\label{eq2}
\end{subequations}
The continuum (local) WeCURE model is given by
\begin{subequations}
\begin{align}
WeCURE(u)&=2\sigma_\eta\int_{\Omega}|\nabla u(x)|^2dx+2\int_{\Omega}(u(x)-b(x))^2dx\\
&+\lambda(\sigma_\eta^2\int_{\Omega}(\Delta u(x))^2dx\\
&+\frac{2\sigma_\eta}{\gamma-1}\int_{\Omega}(-\Delta u(x))(u(x)-b(x))dx\\
&+(\frac{1}{(\gamma-1)^2}+1)\int_{\Omega}(u(x)-b(x))^2dx\\
&+\frac{2\sigma_\eta}{\gamma-1}\int_{\Omega}(b(x)-u(x))(-\Delta b(x))dx)\\
&=\lambda\sigma_\eta^2\int_{\Omega}(\Delta u(x))^2dx\label{l1}\\
&+2\sigma_\eta\int_{\Omega}|\nabla u(x)|^2dx\label{l2}\\
&+\frac{2\lambda\sigma_\eta}{\gamma-1}\int_\Omega|\nabla(u(x)-b(x))|^2dx\label{l3}\\
&+(2+\lambda+\frac{\lambda}{(\gamma-1)^2})\int_{\Omega}(u(x)-b(x))^2dx\label{l4}
\end{align}\label{eq3}
\end{subequations}
where $\sigma_\eta=\frac{1}{2}\int_{\mathbb{R}^d}\eta(h)|h_1|^2dh$, $h_1$ is the first coordinate of vector $h$.
\begin{remark}
The models introduced above contain the corresponding CURE models as special cases if we simply modify some coefficients in the WeCURE models and replace the term $\int_\Omega|\nabla(u(x)-b(x))|^2dx$ by $\int_\Omega|\nabla(u(x)-c\cdot b(x))|^2dx$ ($c\ne1$ is a certain constant).\label{rmk1}
\end{remark}

We are now ready to present the main theorems of this section.
\begin{theorem}
Let $\Omega\subset \mathbb{R}^d$, $d\geq2$ be an open, bounded, connected set with Lipschitz boundary. Let $x_1,\cdots,x_n,\cdots$ be a sequence of i.i.d random points uniformly sampled from $\Omega$. $S=\{x_{i_1},x_{i_2},\cdots,x_{i_m}: x_{i_j} \text{uniformly sampled from } x_1,\cdots,x_n,\cdots\}$ is the set of labeled points whose value is given by $b(x_{i_j})$. Assume the kernel $\eta$ satisfies conditions (A1)-(A3). Then $WeCURE_{n,\varepsilon}$ $\Gamma$-converges to $WeCURE_{\varepsilon}$ as $n\rightarrow\infty$ in the $TL^2$ sense.\label{thm1}
\end{theorem}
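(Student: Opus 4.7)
The plan is to work within the $TL^2$ framework of Garc\'ia Trillos and Slep\v cev. Recall that a discrete function $u_n:P\to\mathbb{R}$ is identified with the pair $(\mu_n,u_n)\in TL^2(\Omega)$, and convergence $(\mu_n,u_n)\to(\mathrm{Leb}|_\Omega,u)$ means that there exist transportation maps $T_n$ pushing $\mathrm{Leb}|_\Omega$ onto $\mu_n$ with $\|u_n\circ T_n-u\|_{L^2}\to 0$. To obtain $\Gamma$-convergence one must verify (i) the \emph{liminf inequality}: for every $(\mu_n,u_n)\to(\mathrm{Leb},u)$ one has $\liminf_n WeCURE_{n,\varepsilon}(u_n)\ge WeCURE_\varepsilon(u)$; and (ii) the existence of a \emph{recovery sequence}: for every $u\in L^2(\Omega)$ there exists $(\mu_n,u_n)\to(\mathrm{Leb},u)$ with $\limsup_n WeCURE_{n,\varepsilon}(u_n)\le WeCURE_\varepsilon(u)$. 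A crucial preliminary is the $\infty$-Wasserstein estimate of Garc\'ia Trillos--Slep\v cev, which supplies transport maps $T_n$ with $\|T_n-\mathrm{Id}\|_\infty\to 0$ almost surely at rate $(\log n/n)^{1/d}$ for $d\ge 3$; the analogous bound holds for the empirical measure $\mu_m^S=\tfrac1m\sum_j\delta_{x_{i_j}}$ of the labeled subset because the ratio $\gamma=n/m$ is held fixed, so $m$ grows proportionally to $n$.

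Since $\varepsilon$ is \emph{fixed} throughout the theorem and $\eta_\varepsilon$ is continuous and compactly supported on $\mathbb{R}^d$, each of the seven terms in $WeCURE_{n,\varepsilon}$ becomes, after the change of variables $x_i=T_n(x_i')$, an integral of the form $\int_{\Omega^k}\eta_\varepsilon(T_n(x)-T_n(y))\cdots F(u_n\circ T_n(x),b\circ T_n(x))\,dx\,dy\,\ldots$. The uniform continuity of $\eta_\varepsilon$ at scale $\varepsilon$ together with $\|T_n-\mathrm{Id}\|_\infty\to0$ gives uniform convergence of the kernel evaluations to $\eta_\varepsilon(x-y)$, while $u_n\circ T_n\to u$ in $L^2$. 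Combined with a standard truncation (to handle the non-Lipschitz squaring of $u$), the single-sum terms \eqref{n1} and \eqref{n2} pass to their continuum counterparts directly.

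The main obstacle is handling the squared-sum terms \eqref{n3}--\eqref{n7}, whose discrete analogues have the shape $\sum_i\bigl(\sum_j W_{ij}\Phi_{ij}\bigr)^2$, i.e.\ quadruple sums over $P$. My plan is to introduce the inner empirical average $V_n^\Phi(x):=\tfrac1n\sum_j\eta_\varepsilon(x-x_j)\Phi(x,x_j)$, together with its labeled counterpart $V_m^{S,\Phi}(x):=\tfrac1m\sum_{j\in S}\eta_\varepsilon(x-x_j)\Phi(x,x_j)$, and prove a \emph{uniform law of large numbers}: for $\Phi$ bounded and continuous, $V_n^\Phi\to\int_\Omega\eta_\varepsilon(\cdot-y)\Phi(\cdot,y)\,dy$ uniformly on $\Omega$ almost surely. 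This follows from Bernstein concentration inequalities combined with a covering of $\Omega$ at scale $\varepsilon$, using boundedness of $\eta_\varepsilon$ (which is $O(\varepsilon^{-d})$, a fixed constant since $\varepsilon$ is fixed). With this uniform LLN in hand, the squared outer sums become Riemann sums of squares of continuous functions, whose limits are exactly the integrals \eqref{n3}--\eqref{n7}; the prefactors $\tfrac{1}{\gamma-1}$ and $\tfrac{1}{(\gamma-1)^2}$ emerge naturally from the normalizations $\tfrac{1}{n-m}=\tfrac{\gamma}{(\gamma-1)n}$ and $\tfrac{1}{m(\gamma-1)}=\tfrac{1}{n-m}$ when expanding the cross-products in \eqref{n4} and \eqref{n7}.

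For the recovery sequence, when $u\in C(\overline\Omega)$ I simply set $u_n(x_i):=u(x_i)$; the limsup inequality is then immediate from the same uniform-convergence arguments, because all the functionals in \eqref{n1}--\eqref{n7} are continuous in $u$ in $L^2$ for fixed $\varepsilon$. For general $u\in L^2$, I would first approximate $u$ by continuous functions $u_\delta\to u$ in $L^2$, apply the construction above to $u_\delta$, and use a diagonal argument together with the continuity of $WeCURE_\varepsilon$ in $L^2$ (again valid because $\eta_\varepsilon$ is bounded). The technical heart of the entire proof is therefore the uniform LLN for $V_n^\Phi$ in the presence of the outer squaring; once that is established, the liminf and limsup inequalities follow term-by-term from the $TL^2$-convergence $u_n\circ T_n\to u$.
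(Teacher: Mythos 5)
Your overall skeleton (transport maps with $\|T_n-\mathrm{Id}\|_\infty\to 0$, term-by-term passage to the nonlocal limit, recovery sequence by restriction of $u$ to the sample points plus a density/diagonal argument) matches the paper's strategy for the single-sum terms, but your central technical device for the squared terms \eqref{n3}--\eqref{n7} has a genuine gap. The uniform law of large numbers you propose is stated for a \emph{fixed} bounded continuous integrand $\Phi$, whereas in the liminf inequality the inner sums are of the form $\sum_j W_{ij}(u_n(x_i)-u_n(x_j))$ with $u_n$ an \emph{arbitrary} sequence of discrete functions converging to $u$ only in the $TL^2$ sense (i.e.\ $u_n\circ T_n\to u$ in $L^2$, with no uniform bound, continuity, or pointwise control). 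Such $u_n$ cannot be fed into a Bernstein-plus-covering uniform LLN, and the subsequent claim that "the squared outer sums become Riemann sums of squares of continuous functions" only makes sense for the recovery sequence built from a continuous $u$, not for the liminf direction. The paper avoids this entirely: after the change of variables through $T_n$, the squared term becomes $\frac{1}{\varepsilon^4}\int_\Omega\bigl(\int_\Omega\eta_\varepsilon(T_n(x)-T_n(y))(u_n\circ T_n(x)-u_n\circ T_n(y))\,dy\bigr)^2dx$, and the error relative to the kernel $\eta_\varepsilon(x-y)$ is controlled deterministically by Cauchy--Schwarz, the $L^2$ convergence of $u_n\circ T_n$, and an estimate on $\int(\eta_\varepsilon(T_n(x)-T_n(y))-\eta_\varepsilon(x-y))^2dy$; the only probabilistic input is the a.s.\ bound on $\|T_n-\mathrm{Id}\|_\infty$. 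In fact this yields convergence of each discrete term to the corresponding nonlocal term along every $TL^2$-convergent sequence, which gives both the liminf and the limsup at once, with the recovery sequence simply the restriction of $u$.

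A second, smaller issue: you invoke uniform continuity of $\eta_\varepsilon$ to pass from $\eta_\varepsilon(T_n(x)-T_n(y))$ to $\eta_\varepsilon(x-y)$, but assumptions (A1)--(A3) only require continuity at $0$, monotonicity and compact support, so $\eta$ may be discontinuous (e.g.\ an indicator of a ball). The paper's fix is to sandwich $\eta_\varepsilon(T_n(x)-T_n(y))$ between $\eta_\varepsilon\bigl(||x-y|-2\|\mathrm{Id}-T_n\|_\infty|\bigr)$ and $\eta_\varepsilon\bigl(|x-y|+2\|\mathrm{Id}-T_n\|_\infty\bigr)$ using monotonicity, and then use continuity of translations in $L^2$ to conclude the kernel-difference integral vanishes; you would need this (or a comparable argument) to cover the admissible kernels. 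Your treatment of the labeled set via a second family of transport maps with $m\propto n$ is consistent with the paper's setting, but the heart of the proof must be reorganized around the transported functions $u_n\circ T_n$ rather than a uniform LLN.
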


\begin{theorem}
Under the assumptions of \cref{thm1}, $WeCURE_{\varepsilon}$ $\Gamma$-converges to $WeCURE$ as $\varepsilon\rightarrow0$ in $H_0^2(\Omega)$ with $L^2(\Omega)$ metric.\label{thm2}
\end{theorem}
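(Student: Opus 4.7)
The plan is to verify the standard two-sided $\Gamma$-convergence in the $L^2(\Omega)$ topology: a recovery sequence producing an upper bound and a liminf inequality producing the matching lower bound, with both functionals viewed as extended real-valued on $L^2(\Omega)$ and finite only on $H_0^2(\Omega)$.

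For the recovery sequence, given $u \in H_0^2(\Omega)$ I would take the constant sequence $u_\varepsilon \equiv u$ and show that each of the seven nonlocal integrals in \eqref{eq2} converges to its local counterpart in \eqref{eq3}. The quadratic term $(n1) \to 2\sigma_\eta\int|\nabla u|^2$ follows from the Bourgain--Brezis--Mironescu identity for $H^1$ functions. The biharmonic term $(n3)$ is handled via the Taylor expansion
\[
L_\varepsilon u(x) := \frac{1}{\varepsilon^2}\int_\Omega \eta_\varepsilon(x-y)\bigl(u(x)-u(y)\bigr)\,dy \longrightarrow -\sigma_\eta\, \Delta u(x)
\]
in $L^2(\Omega)$, followed by squaring and integrating. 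The zeroth-order contributions $(n2), (n5), (n6)$ reduce to multiples of $\int(u-b)^2$ because $\int \eta_\varepsilon(x-y)\,b(y)\,dy \to b(x)$ in $L^2$. Finally, the sum of the two indefinite cross terms $(n4)+(n7)$ converges to $\frac{2\sigma_\eta}{\gamma-1}\bigl[\int(-\Delta u)(u-b) + \int(b-u)(-\Delta b)\bigr]$, which collapses via the algebraic identity $(-\Delta u)(u-b) - (u-b)(-\Delta b) = -\Delta(u-b)\cdot(u-b)$ and an integration by parts licensed by $u \in H_0^2(\Omega)$ to produce exactly $\frac{2\sigma_\eta}{\gamma-1}\int|\nabla(u-b)|^2$, matching $(l3)$. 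A density argument from $C_c^\infty(\Omega)$ to $H_0^2(\Omega)$ extends the pointwise verifications.

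For the liminf, take $u_\varepsilon \to u$ in $L^2(\Omega)$ with $\liminf WeCURE_\varepsilon(u_\varepsilon) < \infty$. Boundedness of $(n1)$ combined with the Ponce compactness-and-lsc companion to BBM forces $u \in H^1$ with $\liminf (n1) \ge 2\sigma_\eta\int|\nabla u|^2$. Boundedness of $(n3)$ gives $\|L_\varepsilon u_\varepsilon\|_{L^2} \le C$, so a subsequence converges weakly in $L^2$ to some $w$. Using the self-adjointness relation $\int L_\varepsilon u_\varepsilon \,\phi\,dx = \int u_\varepsilon\, L_\varepsilon\phi\,dx$ for $\phi \in C_c^\infty(\Omega)$ together with the strong convergence $L_\varepsilon\phi \to -\sigma_\eta\,\Delta\phi$ in $L^2$, one identifies $w = -\sigma_\eta\,\Delta u$ in the distributional sense, so $u \in H^2$ and weak $L^2$ lower semicontinuity yields $\liminf (n3) \ge \sigma_\eta^2\int(\Delta u)^2$. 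The zeroth-order terms pass to their limits via strong $L^2$ convergence of the averaged quantities. The sign-indefinite cross terms $(n4), (n7)$ are not themselves lower semicontinuous, but they converge outright: their structure pairs the weakly-convergent Laplacian factor with a strongly-convergent zeroth-order factor, and such products pass to the limit.

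The principal obstacle is the liminf for the biharmonic term together with the identification of the distributional Laplacian of the weak limit, since one must extract second-order regularity out of a quadratic form in first-order differences while simultaneously controlling the sign-indefinite cross contributions. A second delicate point is that the nonlocal energy sees $\partial\Omega$ only through kernel truncation, so the boundary conditions defining $H_0^2(\Omega)$ must be recovered from the labeled-data structure weighted by $\gamma-1$; this weighting, which distinguishes WeCURE from CURE, is precisely what produces the algebraic cancellation $(n4)+(n7) \to (l3)$ rather than an uncontrolled residual, and is therefore not merely a convenience but a convergence-enabling feature of the model.
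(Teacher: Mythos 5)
Your overall architecture (recovery sequence plus liminf, term by term) matches the paper, and your liminf treatment of the biharmonic term \eqref{n3} --- boundedness of $L_\varepsilon u_\varepsilon$ in $L^2$, weak compactness, identification of the weak limit as $-\sigma_\eta\Delta u$ by testing against $C_c^\infty$ functions via symmetry of the kernel, then weak lower semicontinuity --- is a legitimate alternative to the paper's route, which instead mollifies $u_\varepsilon$, proves a Taylor-expansion lemma for $C^4$ functions with bounded fourth derivatives, and passes $\delta\to 0$ using lower semicontinuity on exhausting subdomains. However, your handling of the cross terms \eqref{n4} and \eqref{n7} has a genuine gap. You pass to the limit in each term separately, pairing a ``weakly convergent Laplacian factor'' with a ``strongly convergent zeroth-order factor''; for \eqref{n7} this requires $\frac{1}{\varepsilon^2}\int_\Omega\eta_\varepsilon(x-y)(b(x)-b(y))\,dy \to -\sigma_\eta\Delta b$ strongly in $L^2(\Omega)$, and this is false in general: on the boundary layer $\{\,x:\ \mathrm{dist}(x,\partial\Omega)<\alpha\varepsilon\,\}$ the kernel is truncated asymmetrically, the first-order term in the Taylor expansion no longer cancels, and the quantity is of size $1/\varepsilon$ on a set of measure of order $\varepsilon$ (so its $L^2$ norm even blows up, and its pairing with the bounded factor $b-u$ contributes an $O(1)$ boundary-layer term). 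Dropping that contribution, and then integrating by parts $(u-b)\bigl(-\Delta(u-b)\bigr)$ while discarding the boundary term $\int_{\partial\Omega}b\,\partial_n b$ (which does not vanish for general $b$, since only $u$, not $b$, has zero trace and normal derivative), produces a limit that differs from \eqref{l3} by exactly such boundary quantities --- the two omissions are of the same order and are what would have to cancel each other.

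The paper avoids both difficulties by never expanding $L_\varepsilon b$ at all: it combines \eqref{n4} and \eqref{n7} through the nonlocal Green's formula into the single nonnegative quantity
\begin{equation*}
\frac{1}{\varepsilon^2}\int_\Omega\int_\Omega \eta_\varepsilon(x-y)\bigl((u_\varepsilon-b)(x)-(u_\varepsilon-b)(y)\bigr)^2\,dy\,dx ,
\end{equation*}
i.e.\ the nonlocal Dirichlet energy of $u_\varepsilon-b$, and then applies the first-order Bourgain--Brezis--Mironescu/Ponce results to $u_\varepsilon-b$, which yield \eqref{l3} directly for both the liminf and the limsup, with no boundary terms and no regularity or compatibility assumptions on $b$ at $\partial\Omega$ beyond $u-b\in H^1(\Omega)$. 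To repair your argument you would either need to adopt this complete-square reduction or impose additional hypotheses on $b$ near the boundary (e.g.\ $\partial_n b=0$ or $b$ constant near $\partial\Omega$) and justify the boundary-layer estimate explicitly; as written, the claimed outright convergence of \eqref{n7} and the subsequent integration by parts do not go through.
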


\begin{theorem}
(Compactness)Under the assumptions of \cref{thm1}, $\{WeCURE_{\varepsilon}\}_{\varepsilon>0}$ satisfies the compactness property with respect to the $L^2(\Omega)$ metric.\label{thm3}
\end{theorem}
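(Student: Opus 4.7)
The plan is to prove that any sequence $\{u_{\varepsilon}\}_{\varepsilon>0}$ with uniformly bounded energy, $\sup_{\varepsilon} WeCURE_{\varepsilon}(u_{\varepsilon}) \leq C < \infty$, is relatively compact in $L^2(\Omega)$. The strategy is classical: first, extract from the energy bound a uniform control on the nonlocal Dirichlet form together with an $L^2$ bound on $u_{\varepsilon}$; then invoke the Bourgain--Brezis--Mironescu-type nonlocal compactness result (in the form used in \cite{trillos2016continuum,trillos2018variational}) to deduce $L^2$-precompactness.

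For the first step, term \eqref{n1} is precisely the quantity whose boundedness is needed for nonlocal compactness, and term \eqref{n2} controls $\iint \eta_\varepsilon(x-y)(u_\varepsilon(x)-b(y))^2\,dy\,dx$, which combined with boundedness of $b$ and the mass property $\int \eta_\varepsilon = 1$ gives a uniform $L^2$ bound on $u_{\varepsilon}$. The subtle issue is that \eqref{n3}--\eqref{n7} are not all separately non-negative: the cross terms \eqref{n4} and \eqref{n7} could in principle cancel the positive contributions. To handle this, I would exploit the fact that the biharmonic block is algebraically a sum of squares. Setting $A(x) = \tfrac{1}{\varepsilon^2}\int \eta_\varepsilon(x-y)(u(x)-u(y))\,dy$ and $B(x) = \tfrac{1}{\gamma-1}\int \eta_\varepsilon(x-y)(u(x)-b(y))\,dy$, one checks that $\lambda \cdot [\eqref{n3}+\eqref{n4}+\eqref{n5}] = \lambda \int_{\Omega} (A(x)+B(x))^2 \, dx \geq 0$. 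A parallel completion of squares works for \eqref{n6}+\eqref{n7}, modulo a $u$-independent additive constant involving only $b$. This is consistent with Remark~\ref{rmk1}, since the local limit \eqref{l3} is manifestly a squared $H^1$-seminorm.

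Once this decomposition is in place, write $WeCURE_\varepsilon(u) = \mathcal{E}_\varepsilon^{+}(u) + R_\varepsilon$, where $\mathcal{E}_\varepsilon^{+}$ collects all the non-negative squared terms (including the grouped binomials above) and $R_\varepsilon$ depends only on $b$, hence is bounded independently of $u$ and $\varepsilon$. The hypothesis then yields uniform bounds on \eqref{n1} and on $\int_\Omega(\int_\Omega \eta_\varepsilon(x-y)(u_\varepsilon(x)-b(y))\,dy)^2\,dx$, giving both the nonlocal Dirichlet bound and an $L^2$ bound on $u_\varepsilon$. At this point the deterministic nonlocal compactness theorem for kernels satisfying (A1)--(A3) applies verbatim: sublevel sets of $u \mapsto \tfrac{1}{\varepsilon^2}\iint \eta_\varepsilon(x-y)(u(x)-u(y))^2\,dy\,dx + \|u\|_{L^2}^2$ are relatively compact in $L^2(\Omega)$, and any limit point lies in $H^1(\Omega)$.

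The main obstacle is the bookkeeping of coefficients and $\varepsilon$-powers in verifying that the grouping into squares is exact, since the factors of $\varepsilon^{-2}$ and $\varepsilon^{-4}$ in \eqref{n3}--\eqref{n7} are precisely what balance $A$ and $B$ to the same scale. A secondary technical point is ensuring $\|b\|_{L^2}$ (and plausibly higher regularity of $b$) is admissible, which is implicit in the setup of the theorem. Once the coercive lower bound $WeCURE_\varepsilon(u) \geq c\,[\tfrac{1}{\varepsilon^2}\iint \eta_\varepsilon(x-y)(u(x)-u(y))^2\,dy\,dx + \|u\|_{L^2}^2] - C(b)$ is established, the compactness conclusion is immediate from the standard nonlocal theory.
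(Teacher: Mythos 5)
Your proposal is correct and follows essentially the same route as the paper, which disposes of this theorem in a single line by declaring it a direct consequence of \cite[Theorem 4]{bourgain2001another}, i.e.\ the Bourgain--Brezis--Mironescu nonlocal compactness criterion that you invoke at the end. Your completion-of-squares grouping of \eqref{n3}--\eqref{n5} and of \eqref{n6}--\eqref{n7}, showing that the signed cross terms cannot destroy coercivity so that a uniform bound on $WeCURE_{\varepsilon}(u_\varepsilon)$ controls \eqref{n1} up to a $u$-independent constant depending on the (implicitly smooth) $b$, is precisely the detail the paper leaves unstated, and it is sound.
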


The complete proof of \cref{thm1} and \cref{thm2} can be found in \cref{prof1,prof2} and \cref{thm3} is a direct consequence of \cite[Theorem 4]{bourgain2001another}.

\section{Conclusion and Future Work}
\label{sec:conclusions}

In this paper, we proposed to use both low dimensionality and smoothness of the underlying data manifold as a regularizer for missing data recovery. For that, we introduced curvature regularization (CURE) and a weighted version of it (WeCURE). Comparing to related models such as LDMM, WNLL, and WNTV, the new regularization was proven more effective for semi-supervised learning and image inpainting on some datasets.

There are plenty of future directions worth exploring. For modelling, a natural question is whether different curvatures can also serve as good smoothing regularizers regularizer for data manifolds and how are they different from the one we chose for CURE? Can these curvatures be easily computed? How does CURE work for other tasks of missing data recovery? Furthermore, convergence analysis of solving the Biharmonic equation \eqref{model:wecure:mdr:el} on manifold also needs to be studied. Due to a lack of understanding of the numerical methods for the  Biharmonic equation, it prohibited us from generalizing CURE to generic inverse problems.

\section*{Acknowledgments}
Bin Dong is supported in part by NSFC 11671022 and Beijing Natural Science Foundation (Z180001). Haocheng Ju is supported by the Elite Undergraduate Training Program of the School of Mathematical Sciences at Peking University. Zuoqiang Shi is supported by NSFC
11671005. We would also like to thank Dr. Wei Zhu for his valuable comments and kindly sharing the codes of both LDMM and LDMM+WNLL for comparisons.

\bibliographystyle{siamplain}
\bibliography{mybibtex,1603.00564.pdf,paper_WNLL.pdf}

\appendix\section{Preliminaries}
\label{pre}
In this section we present a brief review of some basic concepts used in the asymptotic analysis. The interested readers should consult\cite{trillos2016continuum} for a more detailed introduction to these concepts.
\subsection{Optimal transport}
$\Omega$ is an open and bounded domain in $\mathbb{R}^d$. $\mathscr{B}(\Omega)$ is the Borel $\sigma$-algebra of $\Omega$ and $\mathscr{P}(\Omega)$ is the set of all Borel probability measures on $\Omega$. Given $1\leq p<\infty$, the $p-OT$ distance between $\mu,\hat{\mu}\in \mathscr{P}(\Omega)$ is defined by:
\begin{equation}
d_p(\mu,\hat{\mu}):=\text{min}\left\{\left(\int_{\Omega\times \Omega}|x-y|^pd\pi(x,y)\right)^{1/p}:\pi\in\Gamma(\mu,\hat{\mu})\right\}\label{eq:ot}
\end{equation}
where $\Gamma(\mu,\hat{\mu})$ is the set of all Borel probability
measures on $\Omega\times \Omega$ for which the marginal on the first variable is $\mu$ and the marginal on the second
variable is $\hat{\mu}$. The elements $\pi\in\Gamma(\mu,\hat{\mu})$ are also referred as transportation plans between $\mu$ and $\hat{\mu}$. When $p=\infty$
\begin{equation}
d_{\infty}(\mu,\hat{\mu}):=\text{inf}\left\{\text{esssup}_\pi\left\{|x-y|:(x,y)\in\Omega\times \Omega\right\}: \pi\in\Gamma(\mu,\hat{\mu})\right\}
\end{equation}
defines a metric on $\mathscr{P}(\Omega)$, which is called the $\infty$-transportation distance.\\
Given a Borel map $T:\Omega\rightarrow \Omega$ and $\mu\in\mathscr{P}(\Omega)$ the push-forward of $\mu$ by $T$, denoted by $T_{\sharp}\mu\in \mathscr{P}(\Omega)$ is given by:
\begin{equation}
T_{\sharp}\mu(A):=\mu(T^{-1}(A)),A\in \mathscr{B}(\Omega)
\end{equation}
Then for any bounded Borel function $\varphi:\Omega\rightarrow \mathbb{R}$ the following change of variables in the integral holds:
\begin{equation}
\int_\Omega \varphi(x)d(T_\sharp\mu)(x)=\int_\Omega\varphi(T(x))d\mu(x)
\label{cov1}
\end{equation}
When the measure $\mu\in \mathscr{P}(\Omega)$ is absolutely continuous with respect to the Lebesgue measure, \eqref{eq:ot} is equivalent to:
\begin{equation}
\text{min}\left\{\left(\int_\Omega|x-T(x)|^pd\mu(x)\right)^{1/p}: T_\sharp\mu=\hat{\mu}\right\}
\end{equation}

\subsection{The \texorpdfstring{$TL^p$}{TLp} Space}
The $TL^p$ space was introduced in\cite{trillos2016continuum} to compare functions defined on $\Omega_n=\{x_i:i=1,\cdots,n\}$ and an open domain $\Omega$.
\begin{equation}
TL^p(\Omega)=\{(\mu,f):\mu\in \mathscr{P}(\Omega),f\in L^p(\mu)\}
\end{equation}
The metric on the space is
\begin{equation}
d_{TL^p(\Omega)}^p((\mu,f),(\nu,g))=\text{inf}\left\{\int_{\Omega\times\Omega}|x-y|^p+|f(x)-g(y)|^pd\pi(x,y):\pi\in\Gamma(\mu,\nu)\right\}\label{eq:tl}
\end{equation}
where $\Gamma(\mu,\nu)$ the set of transportation plans defined in the previous subsection.
When the measure $\mu\in \mathscr{P}(\Omega)$ is absolutely continuous with respect to the Lebesgue measure, \eqref{eq:tl} is equivalent to:
\begin{equation}
d_{TL^p(\Omega)}^p((\mu,f),(\nu,g))=\text{inf}\left\{\int_{\Omega\times\Omega}|x-T(x)|^p+|f(x)-g(T(x))|^pd\mu(x):T_\sharp\mu=\nu \right\}
\end{equation}

\subsection{\texorpdfstring{$\Gamma$}{gamma}-Convergence}
We follow the definition of $\Gamma$-convergence by \cite{slepcev2019analysis} in a random setting.
\begin{definition}
Let $(Z,d)$ be a metric space and $(\mathcal{X},\mathbb{P})$ be a probability space. For each $\omega\in\mathcal{X}$ the functional $E_n^{(\omega)}:Z\rightarrow R\cup\{\pm\infty\}$ is a random variable. We say $E_n^{(\omega)}$ $\Gamma$-converge almost surely on the domain $Z$ to $E_\infty:Z\rightarrow R\cup\{\pm\infty\}$ with respect to $d$, and write $E_\infty=\Gamma-\text{lim}_{n\rightarrow\infty}E_n^{(\omega)}$, if there exists a set $\mathcal{X'}\subset\mathcal{X}$ with $\mathbb{P}(\mathcal{X'})=1$, such that for all $\omega\in \mathcal{X'}$ and all $f\in Z$:\\
(\romannumeral1)(liminf inequality) for every sequence $\{f_n\}_{n=1}^{\infty}$ converging to $f$\\
$$
E_\infty(f)\leq \liminf\limits_{n\rightarrow\infty}E_n^{(\omega)}(f_n)
$$
(\romannumeral2)(limsup inequality) there exists a sequence $\{f_n\}_{n=1}^{\infty}$ converging to $f$ such that\\
$$
E_\infty(f)\geq \limsup\limits_{n\rightarrow\infty}E_n^{(\omega)}(f_n)
$$
\end{definition}
\begin{definition}
We say that the sequence of nonnegative functionals $\{F_n\}_{n\in\mathbb{N}}$ satisfies the compactness
property if the following holds: Given $\{n_k\}_{k\in\mathbb{N}}$ an increasing sequence of natural numbers and $\{x_k\}_{k\in\mathbb{N}}$
a bounded sequence in $X$ for which
$$
\sup\limits_{k\in\mathbb{N}}F_{n_k}(x_k)<\infty
$$
$\{x_k\}$ is relatively compact in $X$.
\end{definition}


\subsection{Proof of Theorem 5.2}\label{prof1}
\subsubsection{Liminf inequality}
\begin{proof}
Assume that $u_n\xrightarrow{TL^2}u$ as $n\rightarrow\infty$. First we show that
\begin{equation}
\lim\limits_{n\rightarrow\infty}WeCURE_{n,\varepsilon}(u_n)\eqref{eq:disb}=WeCURE_{\varepsilon}(u)\eqref{n2}
\end{equation}
Since $T_\sharp\nu=\nu_n$, using the change of variables\eqref{cov1} it follows that
\begin{align*}
WeCURE_{n,\varepsilon}(u_n)\eqref{eq:disb}&=\frac{1}{\varepsilon^4}\int_{\Omega}(\int_{\Omega}\eta_\varepsilon(T_n(x)-T_n(y))(u_n\circ T_n(x)-u_n\circ T_n(y))dy)^2dx\\
&=\frac{1}{\varepsilon^4}\int_{\Omega}(\int_{\Omega}\eta_\varepsilon(x-y)(u_n\circ T_n(x)-u_n\circ T_n(y))dy)^2dx+\frac{1}{\varepsilon^4}a_{n}\\
\end{align*}
Notice that 
\begin{align*}
|a_{n}|&=|\int_\Omega(\int_\Omega (\eta_\varepsilon(T_n(x)-T_n(y))-\eta_\varepsilon(x-y))(u_n\circ T_n(x)-u_n\circ T_n(y))dy)\\
&\times(\int_\Omega (\eta_\varepsilon(T_n(x)-T_n(y))+\eta_\varepsilon(x-y))(u_n\circ T_n(x)-u_n\circ T_n(y))dy)dx)|\\
&\triangleq|\int_\Omega F_n(x)G_n(x)dx|\\
&\leq|\int_\Omega F_n^2(x)dx|^{\frac{1}{2}}|\int_\Omega G_n^2(x)dx|^{\frac{1}{2}} 
\end{align*}
Moreover, we have
\begin{align*}
|F_n(x)|&\leq \int_\Omega |\eta_\varepsilon(T_n(x)-T_n(y))-\eta_\varepsilon(x-y)||u_n\circ T_n(x)-u(x)|dy\\
&+\int_\Omega |\eta_\varepsilon(T_n(x)-T_n(y))-\eta_\varepsilon(x-y)||u_n\circ T_n(y)-u(y)|dy\\
&+\int_\Omega |\eta_\varepsilon(T_n(x)-T_n(y))-\eta_\varepsilon(x-y)||u(x)-u(y)|dy
\end{align*}
and
\begin{align*}
\int_\Omega (F_n(x))^2dx&\leq 2\times3\times Area(\Omega)^2\times 4\eta_\varepsilon^2(0)\int_\Omega (u_n\circ T_n(x)-u(x))^2dx\\
&+3\int_\Omega(\int_\Omega (\eta_\varepsilon(T_n(x)-T_n(y))-\eta_\varepsilon(x-y))^2dy)(\int_\Omega (u(x)-u(y))^2dy)dx
\end{align*}
Note that $u_n\xrightarrow{TL^2}u$ indicates $u_n\circ T_n\xrightarrow{L^2(\Omega)}u$, so the first two terms go to zero as $n\rightarrow\infty$. We only have to show
\begin{equation}
\lim\limits_{n\rightarrow\infty}\int_{\Omega}(\eta_\varepsilon(T_n(x)-T_n(y))-\eta_\varepsilon(x-y))^2dy=0
\label{eq:2.1inf}
\end{equation}
Note that for almost every $(x,y)\in \Omega\times \Omega$
\begin{equation}
||x-y|-2\left\|Id-T_n\right\|_\infty|\leq|T_n(y)-T_n(x)|\leq |x-y|+2\left\|Id-T_n\right\|_\infty
\end{equation}
along with the monotonicity of $\eta_\varepsilon$, we have
\begin{align*}
(\eta_\varepsilon(T_n(x)-T_n(y))-\eta_\varepsilon(x-y))^2&\leq \max((\eta_\varepsilon(||x-y|-2\left\|Id-T_n\right\|_\infty|)-\eta_\varepsilon(x-y))^2,\\
&(\eta_\varepsilon(|x-y|+2\left\|Id-T_n\right\|_\infty)-\eta_\varepsilon(x-y))^2)\\
&\leq(\eta_\varepsilon(||x-y|-2\left\|Id-T_n\right\|_\infty|)-\eta_\varepsilon(x-y))^2+\\
&(\eta_\varepsilon(|x-y|+2\left\|Id-T_n\right\|_\infty)-\eta_\varepsilon(x-y))^2
\end{align*}
Note that from Theorem 2.5 in \cite{trillos2016continuum}, we have
\begin{equation}
\lim\limits_{n\rightarrow\infty}\left\|Id-T_n\right\|_\infty=0
\end{equation}
along with the standard result in real analysis that if $f\in L^p(\mathbb{R}^d)$, then $\lim\limits_{h\rightarrow0}\int_{\mathbb{R}^d}|f(r+h)-f(r)|^pdr=0$, we have
\begin{equation}
\lim\limits_{n\rightarrow\infty}\int_{\Omega}(\eta_\varepsilon(|x-y|+2\left\|Id-T_n\right\|_\infty)-\eta_\varepsilon(x-y))^2=0
\end{equation}
Similarly, we can show that
\begin{equation}
\lim\limits_{n\rightarrow\infty}\int_{\Omega}(\eta_\varepsilon(||x-y|-2\left\|Id-T_n\right\|_\infty|)-\eta_\varepsilon(x-y))^2=0
\end{equation}
and we obtain\eqref{eq:2.1inf} and $\lim\limits_{n\rightarrow\infty}a_n=0$, along with
\begin{equation}
\lim\limits_{n\rightarrow\infty}\int_{\Omega}(\int_{\Omega}\eta_\varepsilon(x-y)(u_n\circ T_n(x)-u_n\circ T_n(y))dy)^2dx=\int_{\Omega}(\int_{\Omega}\eta_\varepsilon(x-y)(u(x)-u(y))dy)^2dx
\end{equation}
we have
\begin{equation}
\lim\limits_{n\rightarrow\infty}WeCURE_{n,\varepsilon}(u_n)\eqref{eq:disb}=WeCURE_{\varepsilon}(u)\eqref{n2}
\end{equation}
The rest terms can be proved in a similar way and we have
\begin{equation}
\lim\limits_{n\rightarrow\infty}WeCURE_{n,\varepsilon}(u_n)=WeCURE_{\varepsilon}(u)
\end{equation}
\end{proof}

\subsubsection{Limsup inequality}
\begin{proof}
Define $u_n$ to be the restriction of $u$ to the first $n$ data points $X_1,\cdots,X_n$, and we have $u_n\xrightarrow{TL^2}u$. From the proof of the liminf inequality in the previous section, we have
\begin{equation}
\lim\limits_{n\rightarrow\infty}WeCURE_{n,\varepsilon}(u_n)=WeCURE_{\varepsilon}(u)
\end{equation}
\end{proof}

\subsection{Proof of Theorem 5.3}\label{prof2}
\subsubsection{Liminf inequality}
\begin{proof}
Consider an arbitrary $u\in H_0^2(\Omega)$ and suppose that $u_\varepsilon\xrightarrow{L^2(\Omega)}u$ as $\varepsilon\rightarrow0$
\begin{align*}
&\liminf\limits_{\varepsilon\rightarrow0}WeCURE_{\varepsilon}(u_\varepsilon)\\
&\geq \liminf\limits_{\varepsilon\rightarrow0}WeCURE_\varepsilon(u_\varepsilon)\eqref{n1}\\
&+\liminf\limits_{\varepsilon\rightarrow0}WeCURE_\varepsilon(u_\varepsilon)\eqref{n3}\\
&+\liminf\limits_{\varepsilon\rightarrow0}(WeCURE_\varepsilon(u_\varepsilon)\eqref{n4}+WeCURE_\varepsilon(u_\varepsilon)\eqref{n7})\\
&+\liminf\limits_{\varepsilon\rightarrow0}(WeCURE_\varepsilon(u_\varepsilon)\eqref{n2}+WeCURE_\varepsilon(u_\varepsilon)\eqref{n5}+WeCURE_\varepsilon(u_\varepsilon)\eqref{n6})
\end{align*}
The inequality
\begin{equation}
\liminf\limits_{\varepsilon\rightarrow0}WeCURE_\varepsilon(u_\varepsilon)\eqref{n1} \geq WeCURE(u)\eqref{l2}\label{ine1}
\end{equation}
follows from the proof of Theorem 8 in\cite{ponce2004new}. Next we show that
\begin{equation}
\liminf\limits_{\varepsilon\rightarrow0}WeCURE_\varepsilon(u_\varepsilon)\eqref{n3}\geq WeCURE(u)\eqref{l1}\label{ine2}
\end{equation}
We need the following lemma to establish the liminf inequality.
\begin{lemma}
Let $\Omega$ be a bounded open subset of $\mathbb{R}^d$, $\Omega'$ is a open set compactly contained in $\Omega$. Suppose that $\{u_\varepsilon\}_{\varepsilon>0}$ is a sequence of $C^4$ functions such that
\begin{equation}
\sup\limits_{\varepsilon>0}\left\{\left\|D^4u_\varepsilon\right\|_{L^\infty(\mathbb{R}^d)}\right\}<\infty
\label{asm1}
\end{equation}
if$\Delta u_\varepsilon\xrightarrow{L^2(\Omega)}\Delta u$ for some $u\in C^4(\mathbb{R}^d)$, then
\begin{equation}
\lim\limits_{\varepsilon\rightarrow0}\frac{1}{\varepsilon^4}\int_{\Omega'}(\int_{\Omega}\eta_\varepsilon(x-y)(u_\varepsilon(x)-u_\varepsilon(y))dy)^2dx=\sigma_\eta^2\int_{\Omega'}(\Delta u(x))^2dx
\label{cls1}
\end{equation}
where $\sigma_\eta=\frac{1}{2}\int_{\mathbb{R}^d}\eta(h)|h_1|^2dh$, $h_1$ is the first coordinate of vector $h$.
\label{lem1}
\end{lemma}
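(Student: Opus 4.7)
The plan is to Taylor-expand $u_\varepsilon$ to fourth order about $x$, use the radial symmetry of $\eta$ to kill the odd moments, and then pass to the limit using the $L^2$ convergence of $\Delta u_\varepsilon$. Fix $x\in\Omega'$ and set $z=y-x$. Since $\Omega'\Subset\Omega$ and $\eta$ has support in $[0,\alpha]$, for all $\varepsilon$ small enough the support of $\eta_\varepsilon(x-\cdot)$ is contained in $\Omega$, so the inner integral may be written as $\int_{B_{\alpha\varepsilon}(0)}\eta_\varepsilon(z)\bigl(u_\varepsilon(x+z)-u_\varepsilon(x)\bigr)dz$. Taylor's theorem combined with \eqref{asm1} yields
$$
u_\varepsilon(x+z)-u_\varepsilon(x) = \nabla u_\varepsilon(x)\cdot z + \tfrac12 z^\top D^2 u_\varepsilon(x) z + \tfrac16 D^3 u_\varepsilon(x)[z,z,z] + R_\varepsilon(x,z),
$$
where $|R_\varepsilon(x,z)|\le C\|D^4 u_\varepsilon\|_{L^\infty(\mathbb{R}^d)}|z|^4$ uniformly in $x$ and $\varepsilon$.

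Because $\eta$ is radial, every odd-degree monomial in $z$ integrates against $\eta_\varepsilon$ to zero, so the linear and cubic contributions vanish. A direct change of variables $h=z/\varepsilon$ gives $\int\eta_\varepsilon(z)z_i z_j\,dz = 2\sigma_\eta\varepsilon^2\delta_{ij}$, and $\int\eta_\varepsilon(z)|z|^4\,dz = O(\varepsilon^4)$. Substituting these moments produces the pointwise expansion
$$
\int_\Omega \eta_\varepsilon(x-y)\bigl(u_\varepsilon(y)-u_\varepsilon(x)\bigr)\,dy = \sigma_\eta\varepsilon^2\,\Delta u_\varepsilon(x) + \rho_\varepsilon(x),
\qquad \|\rho_\varepsilon\|_{L^\infty(\Omega')}\le C\varepsilon^4,
$$
valid for all $x\in\Omega'$ once $\varepsilon$ is sufficiently small.

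Squaring and dividing by $\varepsilon^4$ gives the integrand
$$
\sigma_\eta^2(\Delta u_\varepsilon(x))^2 + \frac{2\sigma_\eta}{\varepsilon^2}\Delta u_\varepsilon(x)\,\rho_\varepsilon(x) + \frac{1}{\varepsilon^4}\rho_\varepsilon(x)^2,
$$
whose last two terms are dominated, after integrating over $\Omega'$, by $C\varepsilon^2\|\Delta u_\varepsilon\|_{L^2(\Omega)}|\Omega'|^{1/2}$ and $C\varepsilon^4|\Omega'|$ respectively. Both tend to $0$ as $\varepsilon\to0$, using that $\|\Delta u_\varepsilon\|_{L^2(\Omega)}$ is bounded by the hypothesis $\Delta u_\varepsilon\to\Delta u$ in $L^2(\Omega)$. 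The leading term converges directly: $\int_{\Omega'}(\Delta u_\varepsilon)^2\to\int_{\Omega'}(\Delta u)^2$ by the same $L^2$ convergence. Combining these establishes \eqref{cls1}.

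The main obstacle is controlling the Taylor remainder uniformly in $\varepsilon$: after dividing by $\varepsilon^4$ even an $o(\varepsilon^2)$ pointwise error in the inner integral could blow up, so the $|z|^4$ bound on $R_\varepsilon$ and hence assumption \eqref{asm1} is essential. A second, minor point is the boundary: the compact containment $\Omega'\Subset\Omega$ ensures that, for $\varepsilon$ small, no part of the convolution support is cut off by $\partial\Omega$, so the moment identities of $\eta_\varepsilon$ hold exactly and no boundary correction enters the expansion.
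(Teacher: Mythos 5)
Your proof is correct and follows essentially the same route as the paper's: change of variables to $h=(y-x)/\varepsilon$ (using $\Omega'\Subset\Omega$ so the kernel's support is not cut off), fourth-order Taylor expansion with the odd moments killed by radial symmetry of $\eta$, the remainder controlled via the uniform $D^4$ bound, and the $L^2$ convergence of $\Delta u_\varepsilon$ to pass to the limit. If anything, your accounting of the cross term $\varepsilon^{-2}\Delta u_\varepsilon\,\rho_\varepsilon$ via the bounded $L^2$ norm of $\Delta u_\varepsilon$ is slightly more explicit than the paper's, which absorbs it into a single $C\|D^4u_\varepsilon\|_{L^\infty}\varepsilon^4$ error term.
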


\begin{proof}
We claim that
\begin{equation}
\lim\limits_{\varepsilon\rightarrow0}\frac{1}{\varepsilon^4}\int_{\Omega'}(\int_{\Omega}\eta_\varepsilon(x-y)(u_\varepsilon(x)-u_\varepsilon(y))dy)^2dx=\sigma_\eta^2\int_{\Omega'}(\Delta u_\varepsilon(x))^2dx
\label{clm1}
\end{equation}
Using a simple change of variables $h=\frac{y-x}{\varepsilon}$, we have
\begin{align*}
&\frac{1}{\varepsilon^4}\int_{\Omega'}(\int_{\Omega}\eta_\varepsilon(x-y)(u_\varepsilon(x)-u_\varepsilon(y))dy)^2dx\\
&=\frac{1}{\varepsilon^4}\int_{\Omega'}(\int_{x+\varepsilon h\in\Omega}\eta(h)(u_\varepsilon(x)-u_\varepsilon(x+\varepsilon h))dy)^2dx\\
&=\frac{1}{\varepsilon^4}\int_{\Omega'}(\int_{\mathbb{R}^d}\eta(h)(u_\varepsilon(x)-u_\varepsilon(x+\varepsilon h))dh)^2dx\\
&=\frac{1}{\varepsilon^4}\int_{\Omega'}(\int_{\mathbb{R}^d}\eta(h)(\nabla u_\varepsilon(x)\cdot (\varepsilon h)\\
&+\frac{1}{2}(\varepsilon h)^T\cdot\nabla^2 u_\varepsilon(x)\cdot(\varepsilon h))dh)^2dx+C\left\|D^4u_\varepsilon\right\|_{L^\infty(\mathbb{R}^d)}\varepsilon^4\\
&=\sigma_\eta^2\int_{\Omega'}(\Delta u_\varepsilon(x))^2dx+C\left\|D^4u_\varepsilon\right\|_{L^\infty(\mathbb{R}^d)}\varepsilon^4\\
\end{align*}
The second equality follows from that $\Omega'$ is compactly contained in $\Omega$. The third equality follows from fourth order Taylor expansion and the vanishing of first and third order term is a direct result from the radial symmetry of $\eta$. Combined with \eqref{asm1}, we have \eqref{clm1}. Note that $\Delta u_\varepsilon\xrightarrow{L^2(\Omega)}\Delta u$ implies $\left\|\Delta u_\varepsilon\right\|_{L^2(\Omega')}^2\rightarrow\left\|\Delta u\right\|_{L^2(\Omega')}^2$ using H{\"o}lder inequality. Taking $\varepsilon$ to zero in the right hand side of \eqref{clm1} we have \eqref{cls1}.
\end{proof}

We can proceed to the proof of Liminf equality of Theorem 2.2. Our main idea follows from \cite{trillos2016continuum}. Consider an arbitrary $u\in H_0^2(\Omega)$ and suppose that $u_\varepsilon\xrightarrow{L^2(\Omega)}u$ as $\varepsilon\rightarrow0$. We want to show that $\liminf\limits_{\varepsilon\rightarrow0}WeCURE_{\varepsilon}(u_\varepsilon)\geq\sigma_\eta WeCURE(u)$. Without loss of generality, we assume that $\{WeCURE_\varepsilon(u_\varepsilon)\eqref{n3}\}_{\varepsilon>0}$ is uniformly bounded.\\
Consider $J:\mathbb{R}^d\rightarrow [0,\infty)$ a standard mollifier. $J$ is a smooth radially symmetric function, supported in the closed unit ball $\overline{B(0,1)}$ and is such that $\int_{\mathbb{R}^d}J(z)dz=1$. We define $J_\delta(z)=\frac{1}{\delta^d}J(\frac{z}{\delta})$.\\
Fix $\Omega'$ an open domain compactly contained in $\Omega$. Let $\delta'=dist\{\Omega',\partial \Omega\}$. Set $\Omega''=\{x\in \Omega: dist(x,\partial \Omega)>\frac{\delta'}{2}\}$. $\Omega'\subset\subset \Omega''\subset\subset \Omega$. For $0<\delta<\frac{\delta'}{2}$ and for a given function $v\in L^2(\Omega)$ we define the mollified function $v_\delta\in L^1(\mathbb{R}^d)$ by setting $v_\delta(x)=\int_{\mathbb{R}^d}J_\delta(x-z)v(z)dz=\int_{\mathbb{R}^d}J(z)v(x-z)dz$. The functions $v_\delta$ are smooth and satisfy $v_\delta\xrightarrow{L^2(\Omega')}v$ as $\delta\rightarrow0$. Furthermore
\begin{equation}
\nabla v_\delta(x)=\int_{\mathbb{R}^d}\nabla J_\delta(z)v(x-z)dz=\frac{1}{\delta}\int_{\mathbb{R}^d}\frac{1}{\delta^d}\nabla J(\frac{z}{\delta})v(x-z)dz
\label{rep1}
\end{equation}
By taking the second derivative, it follows that there is a constant $C>0$(only depending on the mollifier $J$) such that
\begin{equation}
\left\|D^2v_\delta(x)\right\|_{L^\infty(\mathbb{R}^d)}\leq\frac{C}{\delta^2}\left\|v\right\|_{L^2(\Omega)} \text{and} \left\|D^4v_\delta(x)\right\|_{L^\infty(\mathbb{R}^d)}\leq\frac{C}{\delta^4}\left\|v\right\|_{L^2(\Omega)}
\label{ieq1}
\end{equation}
Since $u_\varepsilon\xrightarrow{L^2(\Omega)}u$ as $\varepsilon\rightarrow0$ the norms $\left\|u_\varepsilon\right\|_{L^2(\Omega)}$ are uniformly bounded. Therefore, taking $v=u_\varepsilon$ in the inequalities\eqref{ieq1} and setting $u_{\varepsilon,\delta}=(u_\varepsilon)_\delta$, implies
$$
\sup\limits_{\varepsilon>0}\left\{\left\|D^4u_{\varepsilon,\delta}\right\|_{L^\infty(\mathbb{R}^d)}\right\}<\infty
$$
Moreover, using \eqref{rep1} to express $D^2u_{\varepsilon,\delta}$ and $D^2u_\delta$, it is straightforward to deduce that
$$
\int_{\Omega'}|D^2(u_{\varepsilon,\delta}-u_\delta)|^2dx\leq\frac{C}{\delta^2}\int_\Omega|u_\varepsilon(x)-u(x)|dx
$$
for some constant $C$ independent of $\varepsilon$. In particular, $\int_{\Omega'}(\Delta(u_{\varepsilon,\delta}-u_\delta))^2dx\rightarrow0$ as $\varepsilon\rightarrow0$ and hence we can apply Lemma\ref{lem1} to infer that\\
\begin{equation}
\lim\limits_{\varepsilon\rightarrow0}\frac{1}{\varepsilon^4}\int_{\Omega'}(\int_{\Omega''}\eta_\varepsilon(x-y)(u_{\varepsilon,\delta}(x)-u_{\varepsilon,\delta}(y))dy)^2dx=\sigma_\eta^2\int_{\Omega'}(\Delta u_\delta(x))^2dx\label{eq:4}
\end{equation}
\begin{align*}
&WeCURE_\varepsilon(u_\varepsilon)\eqref{n3}\\
&\geq\frac{1}{\varepsilon^4}\int_{\Omega''}(\int_{\Omega}\eta_\varepsilon(x-y)(u_\varepsilon(x)-u_\varepsilon(y))dy)^2dx\\
&=\frac{1}{\varepsilon^4}\int_{\mathbb{R}^d}\int_{\Omega''}J_\delta(z)(\int_{\Omega}\eta_\varepsilon(x-y)(u_\varepsilon(x)-u_\varepsilon(y))dy)^2dxdz\\
&\geq \frac{1}{\varepsilon^4}\int_{\mathbb{R}^d}\int_{\Omega'}J_\delta(z)(\int_{\Omega}\eta_\varepsilon(\hat{x}-z-y)(u_\varepsilon(\hat{x}-z)-u_\varepsilon(y))dy)^2d\hat{x}dz\\
&=\frac{1}{\varepsilon^4}\int_{\Omega'}(\int_{\mathbb{R}^d}J_\delta(z)dz)(\int_{\mathbb{R}^d}J_\delta(z)(\int_{\Omega}\eta_\varepsilon(\hat{x}-z-y)(u_\varepsilon(\hat{x}-z)-u_\varepsilon(y))dy)^2dz)d\hat{x}\\
&\geq\frac{1}{\varepsilon^4}\int_{\Omega'}(\int_{\mathbb{R}^d}J_\delta(z)\int_{\Omega}\eta_\varepsilon(\hat{x}-z-y)(u_\varepsilon(\hat{x}-z)-u_\varepsilon(y))dydz)^2d\hat{x}\\
&=\frac{1}{\varepsilon^4}\int_{\Omega'}(\int_{\mathbb{R}^d}\int_{\Omega+\{z\}}\eta_\varepsilon(\hat{x}-\hat{y})J_\delta(z)(u_\varepsilon(\hat{x}-z)-u_\varepsilon(\hat{y}-z))d\hat{y}dz)^2d\hat{x}\\
&=\frac{1}{\varepsilon^4}\int_{\Omega'}(\int_{\mathbb{R}^d}\int_{\Omega''}\eta_\varepsilon(\hat{x}-\hat{y})J_\delta(z)(u_\varepsilon(\hat{x}-z)-u_\varepsilon(\hat{y}-z))d\hat{y}dz)^2d\hat{x}\\
&=\frac{1}{\varepsilon^4}\int_{\Omega'}(\int_{\Omega''}\eta_\varepsilon(\hat{x}-\hat{y})(u_{\varepsilon,\delta}(\hat{x})-u_{\varepsilon,\delta}(\hat{y}))d\hat{y})^2d\hat{x}\\
\end{align*}

The second inequality is obtained by using the change of variables,$\hat{x}=x+z$ and $\Omega'$ is contained in the transformed domain. The third inequality follows from Cauchy-Schwarz inequality. Using a change of variables $\hat{y}=y+z$, we have the third equality. The fourth equality follows from that $\eta$ has compact support, $|z|\leq\delta<\frac{\delta'}{2}$ and thus the integral on $\Omega''$ is the same as the integral on $\Omega+\{z\}$. Let $\varepsilon\rightarrow0$ and apply \eqref{eq:4}, we have
\begin{equation}
\liminf\limits_{\varepsilon\rightarrow0}WeCURE_{\varepsilon}(u_\varepsilon)\eqref{n3}\geq\sigma_\eta^2 \int_{\Omega'}(\Delta u_\delta(x))^2dx
\end{equation}
Since $u_\delta\xrightarrow{L^2(\Omega')}u$ as $\varepsilon\rightarrow0$ and $\int_{\Omega'}(\Delta u(x))^2dx$ is lower semicontinuous, we have
\begin{equation}
\liminf\limits_{\varepsilon\rightarrow0}WeCURE_{\varepsilon}(u_\varepsilon)\eqref{n3}\geq\sigma_\eta^2 \liminf\limits_{\delta\rightarrow0}\int_{\Omega'}(\Delta u_\delta(x))^2dx\geq \sigma_\eta^2\int_{\Omega'}(\Delta u(x))^2dx
\end{equation}
Take $\Omega'\nearrow \Omega$ and we obtain the desired liminf inequality.
Next we show
\begin{equation}
\liminf\limits_{\varepsilon\rightarrow0}(WeCURE_\varepsilon(u_\varepsilon)\eqref{n4}+WeCURE_\varepsilon(u_\varepsilon)\eqref{n7})\geq WeCURE(u)\eqref{l4}
\end{equation}
As $\{WeCURE_\varepsilon(u_\varepsilon)\eqref{n3}\}_{\varepsilon>0}$ is uniformly bounded, we have
\begin{align*}
&\liminf\limits_{\varepsilon\rightarrow0}(WeCURE_\varepsilon(u_\varepsilon)\eqref{n4}+WeCURE_\varepsilon(u_\varepsilon)\eqref{n7})=\\
&\liminf\limits_{\varepsilon\rightarrow0}(\frac{1}{\varepsilon^2}\int_{\Omega}(\int_{\Omega}\eta_\varepsilon(x-y)(u_\varepsilon(x)-u_\varepsilon(y))dy)(u_\varepsilon(x)-b(x))dx\\
&+\frac{1}{\varepsilon^2}\int_{\Omega}(b(x)-u_\varepsilon(x))(\int_{\Omega}\eta_\varepsilon(x-y)(b(x)-b(y))dy)dx)
\end{align*}
Using nonlocal Green's formula in\cite{gilboa2008nonlocal}, we have
\begin{align*}
&\frac{1}{\varepsilon^2}\int_{\Omega}(\int_{\Omega}\eta_\varepsilon(x-y)(u_\varepsilon(x)-u_\varepsilon(y))dy)(u_\varepsilon(x)-b(x))dx\\
&+\frac{1}{\varepsilon^2}\int_{\Omega}(b(x)-u_\varepsilon(x))(\int_{\Omega}\eta_\varepsilon(x-y)(b(x)-b(y))dy)dx\\
&=\frac{1}{\varepsilon^2}\int_{\Omega}\int_{\Omega}\eta_\varepsilon(x-y)(u_\varepsilon(x)-b(x)-u_\varepsilon(y)+b(y))^2dydx
\end{align*}
Substitute $u_\varepsilon-b$ into \eqref{ine1}, we have
\begin{equation}
\liminf\limits_{\varepsilon\rightarrow0}(WeCURE_\varepsilon(u_\varepsilon)\eqref{n4}+WeCURE_\varepsilon(u_\varepsilon)\eqref{n7})\geq WeCURE(u)\eqref{l3}\label{ine3}
\end{equation}
Let $\varepsilon\rightarrow0$, it's straightforward to show
\begin{equation}
\begin{aligned}
&\liminf\limits_{\varepsilon\rightarrow0}(WeCURE_\varepsilon(u_\varepsilon)\eqref{n2}+WeCURE_\varepsilon(u_\varepsilon)\eqref{n5}+WeCURE_\varepsilon(u_\varepsilon)\eqref{n6})\\
&\geq WeCURE(u)\eqref{l4}\label{ine4}
\end{aligned}
\end{equation}
Summing up $\eqref{ine1},\eqref{ine2},\eqref{ine3},\eqref{ine4}$, we have
\begin{equation}
\liminf\limits_{\varepsilon\rightarrow0}WeCURE_\varepsilon(u_\varepsilon)\geq WeCURE(u)
\end{equation}
\end{proof}

\subsubsection{Limsup inequality}
\begin{proof}
From Remark 2.7 in\cite{trillos2016continuum}, we only have to prove the limsup inequality for $u\in C_c^\infty(\Omega)$. We want to prove
\begin{equation}
\limsup\limits_{\varepsilon\rightarrow0} WeCURE_\varepsilon(u)\leq WeCURE(u)
\end{equation}
\begin{align*}
&\limsup\limits_{\varepsilon\rightarrow0}WeCURE_{\varepsilon}(u)\\
&\leq \limsup\limits_{\varepsilon\rightarrow0}WeCURE_\varepsilon(u)\eqref{n1}\\
&+\limsup\limits_{\varepsilon\rightarrow0}WeCURE_\varepsilon(u)\eqref{n3}\\
&+\limsup\limits_{\varepsilon\rightarrow0}(WeCURE_\varepsilon(u)\eqref{n4}+WeCURE_\varepsilon(u)\eqref{n7})\\
&+\limsup\limits_{\varepsilon\rightarrow0}(WeCURE_\varepsilon(u)\eqref{n2}+WeCURE_\varepsilon(u)\eqref{n5}+WeCURE_\varepsilon(u)\eqref{n6})
\end{align*}
The inequality
\begin{equation}
\limsup\limits_{\varepsilon\rightarrow0}WeCURE_\varepsilon(u)\eqref{n1} \leq WeCURE(u)\eqref{l2}\label{ineq1}
\end{equation}
follows from the proof of Theorem 8 in\cite{ponce2004new}. Next we show
\begin{equation}
\limsup\limits_{\varepsilon\rightarrow0}WeCURE_\varepsilon(u)\eqref{n3}\leq WeCURE(u)\eqref{l1}\label{ineq2}
\end{equation}
Let $\Omega_\varepsilon=\{x\in \Omega:dist(x,\partial \Omega)>\alpha\varepsilon\}$.
\begin{align*}
&\frac{1}{\varepsilon^4}\int_{\Omega_\varepsilon}(\int_{\Omega}\eta_\varepsilon(x-y)(u(x)-u(y))dy)^2dx\\
&=\frac{1}{\varepsilon^4}\int_{\Omega_\varepsilon}(\int_{B(x,\alpha\varepsilon)}\eta_\varepsilon(x-y)(y-x)^T\cdot\int_0^1\int_0^p\nabla^2u(x+t(y-x))dtdp\cdot(y-x)dy)^2dx\\
&\leq \int_{\Omega}(\int_{|h|<\alpha}\eta(h)h^T\cdot\int_0^1\int_0^p\nabla^2u(z)dtdp\cdot hdh)^2dz\\
&=\frac{1}{4}\int_{\Omega}(\int_{|h|<\alpha}\eta(h)h^T\cdot\nabla^2u(z)\cdot hdh)^2dz\\
&=\sigma_\eta^2\int_{\Omega}(\Delta u(z))^2dz
\end{align*}

The first equality is obtained by setting $F(t)=v_k(x+t(y-x))-v_k(x),F(1)-F(0)=\int_0^1\int_0^tF''(p)dpdt+f'(0)$ and the vanishing of first order term is a direct result from the radial symmetry of $\eta$. $\nabla^2$ stands for the Hessian matrix. The first inequality is obtained by a change of variables $(y,x)\rightarrow(h,z),h=\frac{y-x}{\varepsilon},z=x+t(y-x)$ and the transformed domain is contained in $\Omega$. As $u$ is compactly supported, it's straightforward to show that
\begin{equation}
\lim\limits_{\varepsilon\rightarrow0}\frac{1}{\varepsilon^4}\int_{\Omega\backslash \Omega_\varepsilon}(\int_{\Omega}\eta_\varepsilon(x-y)(u(x)-u(y))dy)^2dx=0
\end{equation}
then we have
\begin{equation}
\limsup\limits_{\varepsilon\rightarrow0}\frac{1}{\varepsilon^4}\int_{\Omega}(\int_{\Omega}\eta_\varepsilon(x-y)(u(x)-u(y))dy)^2dx\leq\sigma_\eta^2\int_{\Omega}(\Delta u(z))^2dz
\end{equation}
Similar to the proof of inequality\eqref{ine3}, we have
\begin{equation}
\limsup\limits_{\varepsilon\rightarrow0}(WeCURE_\varepsilon(u)\eqref{n4}+WeCURE_\varepsilon(u)\eqref{n7})\leq WeCURE(u)\eqref{l3}\label{ineq3}
\end{equation}
Let $\varepsilon\rightarrow0$, it's straightforward to show
\begin{equation}
\begin{aligned}
&\limsup\limits_{\varepsilon\rightarrow0}(WeCURE_\varepsilon(u)\eqref{n2}+WeCURE_\varepsilon(u)\eqref{n5}+WeCURE_\varepsilon(u)\eqref{n6})\\
&\leq WeCURE(u)\eqref{l4}\label{ineq4}
\end{aligned}
\end{equation}
Summing up $\eqref{ineq1},\eqref{ineq2},\eqref{ineq3},\eqref{ineq4}$, we have
\begin{equation}
\limsup\limits_{\varepsilon\rightarrow0}WeCURE_\varepsilon(u)\leq WeCURE(u)
\end{equation}

\end{proof}

\end{document}